\documentclass{article}

\usepackage[preprint]{neurips_2026}

\usepackage[utf8]{inputenc} 
\usepackage[T1]{fontenc}    
\usepackage{hyperref}       
\usepackage{url}            
\usepackage{booktabs}       
\usepackage{amsfonts}       
\usepackage{nicefrac}       
\usepackage{microtype}      
\usepackage{xcolor}         
\usepackage{graphicx}
\usepackage{amsmath, amssymb, amsthm}

\usepackage{algorithm}
\usepackage{algpseudocode}

\usepackage{multirow}

\usepackage{caption}

\theoremstyle{plain}

\newtheorem{lemma}{Lemma}
\newtheorem{proposition}{Proposition}

\theoremstyle{definition}
\newtheorem{definition}{Definition}

\theoremstyle{remark}
\newtheorem{remark}{Remark}

\theoremstyle{plain}
\newtheorem{corollary}{Corollary}

\usepackage{titlesec}
\titlespacing*{\paragraph}{0pt}{0.5ex plus 0.2ex minus 0.2ex}{1em}

\title{Distribution Shift in Missing Data Imputation: A Risk-Based Perspective and Importance-Weighted Correction under MAR}

\author{%
  Luke Shannon\\
  School of Mathematics\\
  University of Bristol\\
  Bristol, England \\
  \texttt{nq24207@bristol.ac.uk} \\
  \And
  Song Liu\\
  School of Mathematics\\
  University of Bristol\\
  Bristol, England \\
  \texttt{song.liu@bristol.ac.uk}
  \And
  Katarzyna Reluga\\
  School of Business and Economics\\
  Humboldt University of Berlin\\
  Berlin,  Germany\\
  \texttt{katarzyna.reluga@hu-berlin.de}
}

\begin{document}

\maketitle


\begin{abstract}
Missing data imputation, where a model is trained on observed data to estimate unobserved values, is a fundamental problem in machine learning.
In this paper, we rigorously formulate imputation model learning as a mean-squared error risk minimisation problem. We show that when the probability of missingness depends on the data, many state-of-the-art methods fail to account for the resulting distribution shift between the observed data used for training and the full data distribution used for evaluation. Consequently, these approaches do not minimise mean-squared error on the full data distribution.
Instead, we propose a novel imputation algorithm designed to learn an imputation model from the observed data while explicitly accounting for this distribution shift.
Simulation studies show consistent improvements over otherwise identical uncorrected baselines, with average reductions of 3\% in RMSE and 7\% in Wasserstein distance.
\end{abstract}

\section{Introduction}\label{sec:Introduction}

Missing data poses a fundamental obstacle to statistical inference in applied settings. Most classical estimators and modern machine learning methods are designed for complete data and break down in the presence of missingness. A well-established approach, known as imputation, is to complete the dataset by estimating the missing entries, thereby enabling the application of complete-data methods. A wide range of imputation approaches have been proposed, including generative models such as GAIN and MIWAE \citep{yoon2018gainmissingdataimputation, MatteiEtAlMIWAE}, optimal transport–based methods such as Sinkhorn \citep{MuzellecEtAllSinkhorn}, low-rank matrix completion techniques \citep{MazumderEtAlSoftimpute},  iterative procedures such as the \textbf{e}xpectation-\textbf{m}aximisation algorithm (EM) \citep{dempster1977maximum}, and round robin iterative procedures such as \textbf{m}ultiple \textbf{i}mputation by \textbf{c}hained \textbf{e}quations (MICE) \citep{BuurenMICE}, MissForest \citep{StekhovenEtAlMissForest} and HyperImpute \citep{jarrett2022hyperimpute}.


We consider the problem of learning an imputation model from observed data that accurately recovers unobserved entries. We adopt a risk minimisation approach to this \citep{vapnik1998statistical}, where model performance is measured via a loss comparing true and imputed values. However, the risk, as defined with respect to the true data, has an implicit dependence on unobserved entries, which are not available at learning time. Nevertheless, under the assumption that the missingness mechanism depends only on observed variables, we show that the risk can be  written as an importance-weighted expectation over the observed data. These weights highlight a distributional shift between the observed  data used for training and the full data distribution used for evaluation, which has not been explicitly accounted for in existing risk-based imputation approaches. Building on our theoretical results, we propose an importance-weighted risk minimisation algorithm that explicitly accounts for the distribution shift between observed data and full data distributions.

\paragraph{Related Work: }Many existing imputation methods, including likelihood-based, iterative, and adversarial approaches, can be interpreted within our risk minimisation framework, enabling direct comparison with our approach. Likelihood-based methods such as EM and MIWAE minimise an observed-data negative log-likelihood risk; round-robin iterative approaches such as MICE, MissForest, and HyperImpute optimise a sequence of reconstruction-based risks defined on observed entries; and GAIN minimises a reconstruction loss on observed data under an adversarial regularisation scheme. While these methods differ in modelling assumptions and optimisation procedures, they all share a reliance on observed data and the absence of an explicit correction for the resulting mismatch with the full data distribution. In contrast, our approach explicitly accounts for this distributional shift within a risk minimisation framework via importance weighting. We place our findings in a broader literature context in Section~\ref{sec:Discussion}.

Our contributions are as follows:
\begin{itemize}
    \item In Section~\ref{sec:MSERisk}, we show the imputation risk can be expressed as an importance-weighted expectation over observed data, making explicit the effect of distributional shift on the risk.

    \item In Section~\ref{sec:OurAlgorithm}, we propose an imputation algorithm that incorporates our derived importance weights in the training objective. This approach is illustrated Figure~\ref{fig:MotivatingExample}.

    \item In Section~\ref{sec:Experiments},  we benchmark our proposed importance-weighted algorithm against a range of state-of-the-art imputation methods. 
\end{itemize}

\begin{figure}[t]
  \centering  \includegraphics[width=0.8\linewidth]{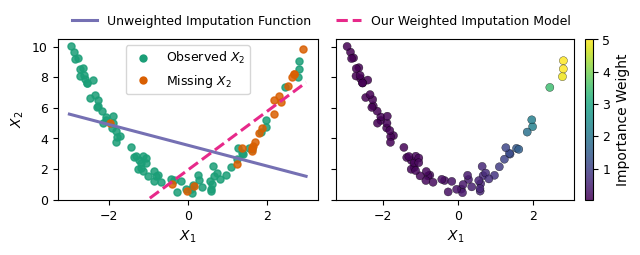}
  \captionsetup{skip=4pt}
  \caption{
The left panel illustrates a two-dimensional dataset where missingness in $X_2$ depends on the fully observed variable $X_1$, such that points with higher values of $X_1$ are more likely to have missing values in $X_2$, inducing a distribution shift between the observed data (green points) and the full data distribution. As a result, an imputation function learned without correcting for the  selection bias (blue line) is less accurate than a weighted approach (dashed red line) in predicting the missing values (red points). The weighted approach uses the importance weights (right panel) in its training to correct for the selection bias. The plot is best viewed in colour.
  }
  \label{fig:MotivatingExample}
\end{figure}

\section{Problem Setup and Background}

To formalise our approach, we introduce the notation for modelling missing data, and define imputation functions together with their associated risk under the full data-generating distribution.

\subsection{Missing Data Notation and Mechanisms}\label{sec:missingness}

Throughout the paper, uppercase letters denote random variables and lowercase letters their realisations, with subscripts $i$ and $\neg i$ denoting coordinate-wise and complement indexing respectively. Let $X = (X_1, \ldots, X_d) \in \mathbb{R}^d$ denote a $d$-dimensional random vector drawn from a distribution $P_X$ with density $p(x)$, representing the true joint distribution of the data. Let $\mathcal{X} := \mathcal{X}_1 \times \cdots \times \mathcal{X}_d$ denote its support, where each $\mathcal{X}_i \subset \mathbb{R}$ is either a finite set (if $X_i$ is discrete) or a possibly disconnected subset of $\mathbb{R}$ (if $X_i$ is continuous), for $i = 1, \ldots, d$. We assume throughout that the data satisfies $\mathbb{E}[X_i^2] < \infty$ for all variables $i$, ensuring that all risks considered in this work are well-defined.

In practice, we do not observe $X$ directly, but instead observe a partially missing version determined by a missingness pattern. Let $R = (R_1, \dots, R_d) \in \{0,1\}^d$ denote the missingness pattern drawn from a distribution $P_R$ with density $p(r)$,  where $R_i = 1$ indicates that $X_i$ is observed and $R_i = 0$ that it is missing. We assume $(X,R)$ is distributed according to $P_{X,R}$ with joint density $p(x,r)$. We model missingness by introducing a symbol $* \notin \bigcup_{i=1}^d \mathcal{X}_i$ to denote missing values, and introduce the partially observed vector $\tilde{X} \in \tilde{\mathcal{X}} := \prod_{i=1}^d \tilde{\mathcal{X}}_i$, where $\tilde{\mathcal{X}}_i := \mathcal{X}_i \cup \{*\}$. $\tilde{X}$ is defined component-wise by
$\tilde{X}_i = X_i$ if $R_i = 1$ and $\tilde{X}_i = *$ if $R_i = 0$
Draws from $\tilde{X}$ correspond to the partially observed data available in practice. For any realisation $r \in \{0,1\}^d$, the vector $X$ induces a decomposition into observed and missing components $X_{\mathrm{obs}} = (X_j : r_j = 1)$ and $X_{\mathrm{miss}} = (X_j : r_j = 0)$.

Finally, the joint density factorises as $p(x,r) = p(r \mid x)p(x)$, defining the missing data mechanism. We distinguish three standard regimes: Missing Completely At Random (\textbf{MCAR}), where $p(r \mid x) = p(r)$; Missing At Random (\textbf{MAR}), where $p(r \mid x) = p(r \mid x_{\mathrm{obs}})$; and Missing Not At Random (\textbf{MNAR}), otherwise.

We use “density” throughout to refer to both probability density and mass functions, and assume all distributions are absolutely continuous with respect to the Lebesgue measure for continuous variables and the counting measure for discrete variables.

\subsection{Imputation Functions and their Risk}\label{sec:ImpFuncAndRisk}
We cast the imputation problem within a risk minimisation framework \citep{vapnik1998statistical}, aiming to learn an imputation function that minimises discrepancy between imputed and true data. Mean squared error (MSE) is a natural choice of discrepancy measure due to its interpretability and widespread use \citep{yoon2018gainmissingdataimputation, StekhovenEtAlMissForest}. Accordingly, we focus on MSE risk minimisation.  We formalise the risk via a functional $\mathcal{J}: \mathcal{G} \to \mathbb{R}$ defined over a hypothesis class $\mathcal{G}$ of measurable imputation functions with finite variance, ensuring that $\mathcal{J}(g)$ is well-defined and finite for all $g \in \mathcal{G}$. We now define the central objects of our study.
\begin{definition}[Imputation function]
\label{def:imputation-map}
An imputation function $g : \tilde{\mathcal{X}} \to \mathcal{X}$ maps a partially observed vector $\tilde{x}$ to a complete vector $g(\tilde{x}) = \big( g_i(\tilde{x}) : i \in \{1, \ldots, d\} \big)$, 
subject to the constraint that observed entries are left unchanged. That is, for each coordinate $i$ such that $r_{\tilde{x}, i} = 1$, where $r_{\tilde{x}}$ denotes the missingness pattern associated with $\tilde{x}$, we require $g_i(\tilde{x}) = \tilde{x}_i$.
\end{definition}

\begin{definition}[Imputation risk]
The MSE risk of an imputation function $g$, given in Definition~\ref{def:imputation-map}, is
\begin{equation}\label{eq:GeneralImputationProblem}
\mathcal{J}(g) = \mathbb{E}_{X, R}\Big[
\mathrm{MSE}\big\{ g(\tilde{X}), X\big\}\Big],
\end{equation}
where $\tilde{X}$ is induced by $(X, R)$ as described in Section~\ref{sec:missingness}.
\end{definition} 

Let $\mathcal{P} := \{\, i \in \{1,\dots,d\} : p(R_i = 0) > 0 \,\}$ denote the set of coordinates subject to missingness. Under MSE loss, the imputation risk decomposes additively over $\mathcal{P}$ into a collection of univariate regression problems. We restrict attention to the  decomposed form of the risk given in Lemma~\ref{lemma:MSERiskUnObs} for the remainder of the paper; a derivation is provided in Appendix~\ref{app:MSELossDecomposes}.

\begin{lemma}
\label{lemma:MSERiskUnObs} 
For a deterministic imputation map given in Definition~\ref{def:imputation-map}, the risk in equation~\eqref{eq:GeneralImputationProblem} satisfies
\begin{equation}\label{eq:MSERiskUnobsTotal}
    \mathcal{J}(g) = \sum_{i \in \mathcal{P}} p(R_i = 0)\, \mathcal{J}_i(g_i),
\end{equation}
where
\begin{equation}\label{eq:MSERiskUnobsComponent}
\mathcal{J}_i(g_i) = \mathbb{E}_{X, R_{\neg i}}\!\left[\left\{g_i(\tilde{X}) - X_i\right\}^2 \mid R_i = 0\right].
\end{equation}
\end{lemma}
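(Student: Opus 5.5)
The plan is to expand the MSE coordinate-wise and split each coordinate's contribution by the value of $R_i$. Take $\mathrm{MSE}\{g(\tilde X),X\}=\sum_{i=1}^d \{g_i(\tilde X)-X_i\}^2$. The normalisation constant $1/d$, if the paper intends one, only rescales both sides and can be absorbed into $\mathcal{J}_i$. Since each term is nonnegative and, by the finite-variance assumptions on $X$ and on $g\in\mathcal{G}$, integrable, linearity of expectation gives
\begin{equation*}
\mathcal{J}(g)=\sum_{i=1}^d \mathbb{E}_{X,R}\big[\{g_i(\tilde X)-X_i\}^2\big].
\end{equation*}

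Next I would insert the partition of unity $1=\mathbb{1}\{R_i=1\}+\mathbb{1}\{R_i=0\}$ into each summand. On the event $R_i=1$, the coordinate is observed, so $\tilde X_i=X_i$. The constraint in Definition~\ref{def:imputation-map} then forces $g_i(\tilde X)=\tilde X_i=X_i$, and the integrand vanishes pointwise. Only the term $\mathbb{E}_{X,R}[\{g_i(\tilde X)-X_i\}^2\mathbb{1}\{R_i=0\}]$ survives.

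For $i\notin\mathcal{P}$ we have $p(R_i=0)=0$, so this term is zero and the sum reduces to $\mathcal{P}$. For $i\in\mathcal{P}$, the conditioning event has positive probability, so elementary conditioning gives
\begin{equation*}
\mathbb{E}\big[h\,\mathbb{1}\{R_i=0\}\big]=p(R_i=0)\,\mathbb{E}\big[h\mid R_i=0\big], \qquad h=\{g_i(\tilde X)-X_i\}^2 .
\end{equation*}
Given $R_i=0$, the remaining randomness is in $(X,R_{\neg i})$, and $\tilde X$ is a function of $(X,R_{\neg i})$ once $R_i=0$ is fixed. The conditional expectation is therefore exactly $\mathcal{J}_i(g_i)$ in \eqref{eq:MSERiskUnobsComponent}.

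There is no real obstacle here. The only care needed is measure-theoretic bookkeeping: justifying integrability so that the sum and expectation can be exchanged, which follows from $\mathbb{E}[X_i^2]<\infty$, finite variance of $g$, and $(a-b)^2\le 2a^2+2b^2$. One should also note that the conditional expectation is well defined only because it is taken on $\mathcal{P}$, where $p(R_i=0)>0$. Determinism of $g$ is used so that the constraint $g_i(\tilde X)=X_i$ on $\{R_i=1\}$ holds pointwise rather than in distribution.
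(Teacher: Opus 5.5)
Your proposal is correct and follows essentially the same route as the paper's proof. Like the paper, you expand the MSE as the unnormalised coordinate-wise sum (the paper's own convention, so the $1/d$ hedge is unnecessary), use the observed-entry constraint of Definition~\ref{def:imputation-map} to remove the $R_i=1$ contributions, and condition on $R_i=0$ to obtain $p(R_i=0)\,\mathcal{J}_i(g_i)$; your choice to discard $i\notin\mathcal{P}$ before conditioning is slightly cleaner than the paper's order, which writes a conditional expectation given the null event $\{R_i=0\}$ before dropping those terms.
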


\subsection{Covariate Shift}\label{sec:CovariateShift}
Covariate shift is central to the theory developed in this paper. Lemma~\ref{lemma:MSERiskUnObs} shows that the imputation risk decomposes into a collection of univariate regression problems. In Section~\ref{sec:MSERisk}, we show that the univariate regression problems arising from Lemma~\ref{lemma:MSERiskUnObs} admit a covariate shift formulation under the observed data distribution. To formalise the connection, we recall the standard covariate shift setting. Consider a regression problem in which we aim to predict an output $y$ from a predictor $x$. Let $p_{\mathrm{train}}(x,y)$ and $p_{\mathrm{test}}(x,y)$ denote the joint distributions of the training and testing data respectively. Covariate shift refers to the setting in which the input marginal distributions differ,
$p_{\mathrm{train}}(x) \neq p_{\mathrm{test}}(x)$, while the conditional distribution remains invariant,
$p_{\mathrm{train}}(y \mid x) = p_{\mathrm{test}}(y \mid x)$. In covariate shift settings, empirical risk minimisation on the training distribution is biased with respect to the test distribution. A standard correction is importance weighting, which reweights training samples by the density ratio $p_{\mathrm{test}}(x) / p_{\mathrm{train}}(x)$ in order to recover an unbiased estimate of the test risk \citep{Shimodaira2000CovariateShift, SugiyamaEtAlCovariateShiftCV}.

\section{Risk Minimisation Formulation for Imputation}\label{sec:RiskMinFormulation}

In Section~\ref{sec:MSERisk}, we derive an observed data representation of the coordinate-wise imputation risks from Lemma~\ref{lemma:MSERiskUnObs}, revealing a connection to covariate shift. In Sections~\ref{sec:SurrogateRisk} and \ref{sec:OurAlgorithm}, we build on this representation to develop the proposed imputation algorithm.

\subsection{The Oracle Risk Minimisation Problem}\label{sec:MSERisk}

The coordinate-wise risks $\mathcal{J}_i(g_i)$, defined in equation~\eqref{eq:MSERiskUnobsComponent}, are intractable since they depend on the unobserved quantity $X_i \mid R_i = 0$. In Proposition~\ref{prop:GeneralImputationObs}, we show that, under the MAR assumption and  overlap condition defined in Definition~\ref{def:overlap}, each $\mathcal{J}_i(g_i)$ can be equivalently written as importance-weighted expectation over observed data. The proof of Proposition~\ref{prop:GeneralImputationObs} is given in Appendix~\ref{app:rewritingMSEobjective}.

\begin{definition}[Overlap]\label{def:overlap}
For $i \in \mathcal{P}$, we say that the overlap condition holds for $R_i$ if
\[
0 < p(R_i = 1 \mid X_{\mathrm{obs}} = x_{\mathrm{obs}}, R_{\neg i} = r_{\neg i}) < 1,
\]
for all $(x_{\mathrm{obs}}, r_{\neg i})$ such that $p(x_{\mathrm{obs}}, r_{\neg i} \mid R_i = 0) > 0$.
\end{definition}

\begin{proposition}\label{prop:GeneralImputationObs}
For $i \in \mathcal{P}$, assume that the MAR assumption holds, so that $p(r \mid x) = p(r \mid x_{\mathrm{obs}})$, and that $R_i$ satisfies the overlap condition in Definition~\ref{def:overlap}. Then the functional $\mathcal{J}_i(g_i)$, defined in equation~\eqref{eq:MSERiskUnobsComponent}, can be expressed in terms of observed data as:
\begin{equation}
    \mathcal{J}_i(g_i) =
    \mathbb{E}_{X, R_{\neg i}}\!\Big[
    w_i(X_{\mathrm{obs}}, R_{\neg i})
    \left\{ g_i(\tilde{X}) - X_i \right\}^2
    \mid R_i = 1
    \Big].\label{eq:MSELossObs}
\end{equation}

where the weights $w_i(\cdot,\cdot)$ are defined by
\begin{align}
\label{eq:importanceWeights}
    w_i(x_{\mathrm{obs}}, r_{\neg i})
    := \frac{p(x_{\mathrm{obs}}, r_{\neg i} \mid r_i = 0)}{p(x_{\mathrm{obs}}, r_{\neg i} \mid r_i = 1)}.
\end{align}
\end{proposition}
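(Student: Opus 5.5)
The plan is to reduce the identity to a standard covariate-shift change of measure. The key observation is that $X_{\mathrm{obs}}$ and $X_i$ play the roles of input and output. Here $X_{\mathrm{obs}}$ means the coordinates observed under $R_{\neg i}$, excluding $i$.

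\textbf{Step 1: what $g_i(\tilde X)$ depends on.} First I fix the ambiguity in $\tilde X$. On the event $R_i = 0$, $\tilde X$ is a deterministic function of $(X_{\mathrm{obs}}, R_{\neg i})$, with coordinate $i$ set to $*$. Evaluating $g_i$ on the input with coordinate $i$ replaced by $*$, even on the event $R_i=1$, makes $g_i(\tilde X)$ a function $h(X_{\mathrm{obs}}, R_{\neg i})$. This is the reading under which the right-hand side of the statement is meaningful, since otherwise Definition~\ref{def:imputation-map} would force $g_i(\tilde X) = X_i$ there. I would state this convention explicitly.

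\textbf{Step 2: write $\mathcal{J}_i$ as an iterated expectation.} With $v := (x_{\mathrm{obs}}, r_{\neg i})$, I write
\[
\mathcal{J}_i(g_i) = \sum_{r_{\neg i}} \int p(x_{\mathrm{obs}}, r_{\neg i} \mid R_i=0)\, \mathbb{E}\big[\{h(v) - X_i\}^2 \mid X_{\mathrm{obs}}=x_{\mathrm{obs}}, R_{\neg i}=r_{\neg i}, R_i=0\big]\, dx_{\mathrm{obs}}.
\]

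\textbf{Step 3: the conditional of $X_i$ is invariant (main obstacle).} The crux is to show
\[
p(x_i \mid x_{\mathrm{obs}}, r_{\neg i}, R_i=0) = p(x_i \mid x_{\mathrm{obs}}, r_{\neg i}, R_i=1).
\]
The delicate point is that "$x_{\mathrm{obs}}$" in the MAR assumption $p(r\mid x)=p(r\mid x_{\mathrm{obs}})$ refers to the coordinates observed under the full pattern $r$. That set contains $x_i$ when $r_i=1$ but not when $r_i=0$.

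I would argue via Bayes. Let $u$ denote the coordinates missing under $r_{\neg i}$, excluding $i$. Then
\[
p(x_i \mid x_{\mathrm{obs}}, r) \propto \int p(r \mid x)\, p(x_i, x_{\mathrm{obs}}, u)\, du.
\]
For $r_i = 0$, MAR makes $p(r\mid x)$ a function of $x_{\mathrm{obs}}$ alone. It therefore factors out, leaving $p(x_i \mid x_{\mathrm{obs}})$.

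For $r_i=1$, MAR says $p(r\mid x)$ depends only on $(x_{\mathrm{obs}}, x_i)$, which does not by itself give invariance. So I would combine the two patterns. Summing over $r_i$, $p(r_{\neg i}\mid x) = p(R_i=0, r_{\neg i}\mid x) + p(R_i=1, r_{\neg i}\mid x)$. Here the first term is free of $(x_i,u)$ and the second is free of $u$.

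This is where I expect the argument to need care. It may require interpreting MAR as ``$R_i \perp X_i \mid X_{\mathrm{obs}}, R_{\neg i}$''. That is the form under which the paper's weights are genuinely a pure covariate shift, and overlap enters there. If the pointwise MAR formulation does not directly yield it, I would adopt the conditional-independence reading, $p(R_i=0\mid x, r_{\neg i}) = p(R_i=0 \mid x_{\mathrm{obs}}, r_{\neg i})$, which gives invariance immediately by Bayes' rule.

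\textbf{Step 4: change of measure.} With invariance in hand, I multiply and divide by $p(v\mid R_i=1)$. Overlap guarantees that this is positive wherever $p(v \mid R_i = 0) > 0$, since $p(R_i=1\mid v) > 0$ gives $p(v\mid R_i=1) > 0$ by Bayes. The integral becomes
\[
\mathbb{E}\big[w_i(V)\,\mathbb{E}[\{h(V)-X_i\}^2\mid V, R_i=1] \,\big|\, R_i=1\big],
\]
and collapsing the tower gives the claimed identity, with $w_i$ as defined in the statement.

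Finally, I would check finiteness. It follows because the left-hand side is finite ($g\in\mathcal{G}$, $\mathbb{E}X_i^2<\infty$), and the manipulations are of nonnegative integrands, so Tonelli applies.
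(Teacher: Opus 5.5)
Your proposal is correct and follows the paper's proof step for step: total expectation over $(X_{\mathrm{obs}},R_{\neg i})$ given $R_i=0$, swapping $R_i=0$ for $R_i=1$ in the inner conditional law of $X_i$, multiplying and dividing by $p(x_{\mathrm{obs}},r_{\neg i}\mid R_i=1)$, and overlap plus Bayes' rule for well-defined weights, with your Step~1 masking convention making explicit something the paper leaves implicit. Your hesitation in Step~3 is warranted, and you resolve it the same way the paper does, since the paper's swap cites $R\perp\!\!\!\perp X_{\mathrm{miss}}\mid X_{\mathrm{obs}}$ (with $X_{\mathrm{obs}}$ excluding $X_i$), which implies your conditional-independence reading and is strictly stronger than the pointwise condition in the statement: for $d=2$, $i=1$, $X_1\sim\mathrm{Bernoulli}(1/2)$ independent of $X_2$, $p((0,1)\mid x)=0.2$, $p((1,0)\mid x)=0.6x_1$, $p((1,1)\mid x)=0.8-0.6x_1$, $p((0,0)\mid x)=0$, pointwise MAR and overlap hold, yet for $g_1\equiv 0$ the left side is $0.5$ and the right side is $0.2$.
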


Proposition~\ref{prop:GeneralImputationObs} reveals an interesting phenomenon. Optimal imputation can be characterised as solving a  collection of covariate shift problems, one for each partially observed coordinate in the data, making explicit a link between the fields of missing data imputation and covariate shift. Conditioning on the missingness indicator $R_i$ induces two subpopulations: the observed entries ($R_i = 1$) and the unobserved entries ($R_i = 0$). Under MAR, the conditional distribution $X_i \mid (X_{\mathrm{obs}}, R_{\neg i})$ is invariant across these subpopulations, while the marginal distribution of $(X_{\mathrm{obs}}, R_{\neg i})$ are not necessarily the same. The $i^{th}$ coordinate risk in equation~\eqref{eq:MSELossObs} is therefore analogous to covariate shift, as introduced in Section~\ref{sec:CovariateShift}, where $X_i$ plays the role of the response variable, $(X_{\mathrm{obs}}, R_{\neg i})$ are the covariates, and the observed and unobserved subpopulations correspond to training and test distributions respectively. The weights in equation~\eqref{eq:importanceWeights}, analogous to $p_{\mathrm{test}}(x)/p_{\mathrm{train}}(x)$, reweight the observed data so that empirical risk minimisation targets the distribution of $X_i$ in the unobserved subpopulation.

In contrast, existing risk based imputation approaches can be viewed as minimising an approximation of equation~\eqref{eq:MSELossObs} in which the weighting term is omitted. Such an approach implicitly fails to account for the the distributional shift between observed and full data distributions, yielding imputation models that are optimised for the wrong distribution.

\subsection{Our Surrogate Risk Objective}\label{sec:SurrogateRisk}
Although the oracle coordinate-wise risk derived in equation~\eqref{eq:MSELossObs} provides a principled target for imputation, both $g_i$ and $w_i$ depend on $X_{\mathrm{obs}}$ and $R_{\neg i}$. The weights $w_i$ depend on these quantities explicitly, while $g_i$ depends on them implicitly through its input $\tilde{X}$. This dependence has two consequences. First, conditioning on $R_{\neg i}$ increases the effective dimensionality of both $g_i$ and $w_i$, since each partially observed variable contributes additional pattern-specific features, which becomes particularly pronounced in settings with many variables subject to missingness. Second, the structure of $X_{\mathrm{obs}}$ itself varies across missingness patterns, leading to heterogeneity in the resulting optimisation problems across coordinates and patterns. As a consequence, a naive minimisation of equation~\eqref{eq:MSELossObs} would require solving a collection of pattern-specific risk minimisation problems, which is computationally and statistically inefficient \citep{Stempfle_2023}. We therefore seek an approach that avoids the need for pattern-specific conditional models while maintaining a low-dimensional conditioning set for each model.

\paragraph{Low dimension modelling set:} Under the MAR assumption, the missingness indicators $R_{\neg i}$ are conditionally independent of $X_i$ given $X_{\mathrm{obs}}$, implying that they do not provide additional information for the imputation task beyond $X_{\mathrm{obs}}$. The conditional independence provides a principled justification for restricting attention to imputation models in which both $g_i$ and  $w_i$ depend only on $X_{\mathrm{obs}}$, yielding a lower dimensional form of our optimisation problem without loss of relevant information. 

\paragraph{Avoiding pattern specific modelling:}
To overcome the need for pattern-specific modelling, we adopt a round robin iterative strategy \citep{BuurenMICE, StekhovenEtAlMissForest}. In such approaches, missing data is typically initialised using  column-wise means. The algorithm then proceeds by cycling through partially observed variables in turn. At each step, a conditional imputation model is fitted for the target variable, using the current values of the remaining variables as predictors. Missing entries are then updated using the fitted model. A key advantage of this approach is that each iteration operates on a fully imputed dataset, thereby avoiding the need to construct separate models for different missingness patterns. Round-robin imputation approaches are widely used in practice and across applied fields such as epidemiology \citep{RescheRigonEtAl2013} and psychology \citep{XuEtAl2020} due to their flexibility, interpretability, and robust empirical performance --- advantages that we retain by adopting this framework.

\paragraph{Surrogate risk formulation:}
We therefore propose minimising a data-dependent surrogate risk within a round robin scheme. At iteration $t$, we treat the current imputed dataset $\hat{X}^{(t)}$ as given. When updating coordinate $i$, the goal is to select an imputation function $g_i$ that performs well with respect to this current state of the data. To this end, we define the surrogate objective
\begin{align}
\mathcal{J}_{i}^{(t)}(g_i)
= \mathbb{E}\!\left[
    w_i(\hat{X}^{(t)}_{\neg i})
    \big( g_i(\hat{X}^{(t)}_{\neg i}) - X_i \big)^2
    \mid R_i = 1
\right].\label{eq:SurrogateRisk}
\end{align}

which will form the basis of our approach. 

\section{Our Weighted Imputation Algorithm}\label{sec:OurAlgorithm}

\begin{algorithm}[tb]
\caption{Weighted Iterative Imputation}
\label{alg:weighted-imputation}
\begin{algorithmic}
    \State {\bfseries Input:} Incomplete dataset $\mathcal{D}^{\mathrm{miss}}$, visitation order $v$, convergence criteria $\delta$. 
    \State {\bfseries Output:} Completed dataset $\hat{\mathcal{D}}$.
    \State $\hat{\mathcal{D}} \gets \texttt{Initial Imputation}(\mathcal{D})$
    \Repeat
        \For{column $i$ {\bfseries in} $v$}
            \State
            Estimate $ w_i$ using Algorithm~\ref{alg:estimate-weights}.
            \State Fit $g_i$ to minimise \eqref{eq:SurrogateRisk} using estimated $w_i$ as the weights.
            \State Replace missing entries in column $i$ of $\hat{\mathcal{D}}$ with predictions from $\hat{g}_i$.
        \EndFor
    \Until{Convergence criteria $\delta$ is satisfied}
    \State \textbf{return} $\hat{\mathcal{D}}$
\end{algorithmic}
\end{algorithm}

\begin{algorithm}[tb]
\caption{Estimate weights $w_i$ for column $i$}
\label{alg:estimate-weights}
\begin{algorithmic}
    \State {\bfseries Input:} Imputed dataset $\hat{\mathcal{D}} = \{\hat{x}^{(k)}, r^{(k)}\}_{k=1}^N$, target column $i$. 
    \State {\bfseries Output:} Estimated weights $\{w_i^{(k)}\}_{k=1}^N$ for column $i$.
    \State {\bfseries Procedure:}
    \State Model $p(R_i = 1 \mid \hat x_{\neg i})$ using $\eta_i(\hat x_{\neg i})$ trained using binary classification loss.
    \State Compute weights: $w_i^{(k)} \gets \frac{1 - \eta_i\left(\hat{x}^{(k)}_{\neg i}\right)}{\eta_i\left(\hat{x}^{(k)}_{\neg i}\right)}$.
    \State \textbf{return} $\{w_i^{(k)}\}_{k=1}^N$
\end{algorithmic}
\end{algorithm}

We now describe our proposed weighted iterative imputation algorithm. The algorithm proceeds in a manner analogous to existing round-robin iterative approaches \citep{BuurenMICE, StekhovenEtAlMissForest, jarrett2022hyperimpute}. The key difference is that the importance weights $w_i$ in the surrogate risk \eqref{eq:SurrogateRisk} are unknown and must also be estimated from data at each iteration and incorporated into the estimation of the conditional imputation model $g_i$.

As in existing approaches, the practitioner specifies a conditional model hypothesis class $\mathcal{G}_i$ for each partially observed coordinate $i \in \mathcal{P}$, as defined in Section~\ref{sec:ImpFuncAndRisk}, from which the imputation function $g_i$ is selected. In addition, we introduce a corresponding conditional model class $\mathcal{W}_i$, from which the importance-weighting functions are estimated at each iteration.

Algorithm~\ref{alg:weighted-imputation} implements the surrogate risk minimisation procedure described in Section~\ref{sec:SurrogateRisk}. The algorithm iterates through the coordinates according to a visitation order $v$, which is an ordered subset of $\mathcal{P}$. At iteration $t$ when updating column $i$, the current imputed dataset provides an empirical realisation of $\hat{X}^{(t)}$ given in equation~\eqref{eq:SurrogateRisk}. Given this state, importance weights are estimated using a binary classification model selected from the hypothesis class $\mathcal{W}_i$, following a density ratio estimation approach \citep{QinBinaryDRE, BickelBinaryDRE}, as detailed in Algorithm~\ref{alg:estimate-weights}. The resulting weights are then used to fit the conditional imputation function $g_i \in \mathcal{G}_i$ by minimising an empirical approximation of \eqref{eq:SurrogateRisk}, after which missing entries in column $i$ are updated while observed entries remain unchanged. The procedure is repeated until the stopping criterion $\delta$ is satisfied.

\subsection{Practical Implementation}\label{sec:PracticalImplementation}

We now describe the implementation details of Algorithms~\ref{alg:weighted-imputation} and \ref{alg:estimate-weights}. When updating column $i \in \mathcal{P}$, we proceed as follows.

\paragraph{Weight Estimation:} 
Following Algorithm~\ref{alg:estimate-weights}, we estimate importance weights by modelling the missingness mechanism $p(R_i = 1 \mid X_{\neg i})$ using a binary classification model from a hypothesis class $\mathcal{W}_i$ of logistic regression models. Hyperparameters are tuned over a predefined grid on a random subsample of the data, to minimise standardised mean differences (SMD) \citep{AustinEtAlIPTW} between observed and missing values. 
The estimated probabilities are transformed into importance weights and clipped at the 5th and 95th percentiles to reduce the influence of extreme values.
\paragraph{Stabilisation Measures:}
Incorporating the estimated weights can reduce the effective variance of covariates under the weighted distribution, leading to unstable conditional model estimates. To mitigate this, we identify and remove, at each iteration, features with variance below $10^{-12}$. This filtering step is performed prior to conditional model fitting.

\paragraph{Regression Estimation:} 
Given the observed data, estimated weights, and the stabilised set of covariates, we fit a weighted regression model. 
To control variance, we follow \citep{SugiyamaEtAlCovariateShiftCV} and introduce a tempering parameter $\gamma$. The latter is selected via cross-validation on a random subset of the observed data for column $i$. 
The final model is then estimated using weighted least squares with weights $w_i^{\gamma}$. 
Since tuning $\gamma$ at each iteration represents the primary computational cost of our procedure, we adopt a warm-starting strategy to reduce overhead and improve stability across iterations. 
Further details are provided in Appendix~\ref{appendix:gamma-discussion}.

\paragraph{Convergence and Early Stopping:}
We monitor convergence using the median change in imputed values between successive iterations, as specified by the tolerance parameter $\delta$ in Algorithm~\ref{alg:weighted-imputation}. Furthermore, a maximum number of iterations is used as a safeguard.

\section{Experiments}\label{sec:Experiments}

For linear, Random Forest (RF), and Multi-Layer Perceptron (MLP) conditional model classes, we compare our method against an unweighted round-robin iterative baseline using the same conditional imputation model class, so that performance differences are primarily attributable to the inclusion of importance weights. The unweighted baseline corresponds to HyperImpute \citep{jarrett2022hyperimpute} restricted to linear conditional models in the linear setting, and to scikit-learn’s IterativeImputer \citep{scikitlearn} in the RF and MLP settings.

\subsection{Experimental Setup}\label{sec:ExperimentalSetup}

\paragraph{Datasets:}We study eight datasets spanning a range of tabular, time-series, and image modalities to evaluate performance across diverse data regimes and assess robustness of our proposed method beyond any single domain. Datasets have been preprocessed to remove zero-variance columns and to cap extreme values so that skewness lies in the range $[-10, 10]$ to ensure that results are not unduly influenced by a small number of extreme observations. All dataset details and preprocessing steps are provided in Appendix~\ref{appendix:DatasetMeta}.

\paragraph{MAR Simulation:}To simulate MAR data, we randomly select up to four continuous features and designate them as partially observed. Let $\mathcal{M} \subseteq \{1, \dots, d\}$ denote the indices of these features. For each $i \in \mathcal{M}$, the missingness mechanism is defined to depend only on a subset of fully observed covariates. Specifically, we sample a set of indices $\mathcal{C}_i \subseteq \{1, \dots, d\} \setminus \mathcal{M}$, and define the observation probability via a logistic model $
\mathbb{P}(R_i = 1 \mid X) 
= \sigma\!\left( \alpha \sum_{j \in \mathcal{C}_i} X_j + \beta_i \right)$
where $\sigma(z) = (1 + e^{-z})^{-1}$. Here, $\alpha$ controls the strength of dependence on the covariates, and $\beta_i$ is chosen to achieve a desired marginal missing rate for feature $X_i$. We repeat experiments over 35 seeds, and within each seed we vary the MAR strength parameter $\alpha \in [-3, 3] \cap \mathbb{Z}$. For each seed, the set of variables $\mathcal{M}$ and the corresponding predictor sets $\mathcal{C}_i$ are fixed. This yields $245$ simulation setups per dataset. Unless stated otherwise, $\beta_i$ is chosen to simulate a 30\% missingness rate.

\paragraph{Performance Metrics:} Let $D^{\mathrm{true}}$ denote the complete dataset and $\hat{D}$ an imputed version of an incomplete dataset $D^{\mathrm{miss}}$. We evaluate imputation accuracy using the root mean squared error (RMSE) over the missing entries, indexed by $M = \{(i,j): D^{\mathrm{miss}}_{i,j} \text{ is missing}\}$. To assess distributional similarity, we compute a marginal Wasserstein distance (WD), defined as the sum of one-dimensional Wasserstein distances across partially observed features: $\mathrm{W}(D^{\mathrm{true}}, \hat{D})
=
\sum_{j : X_j \in \mathcal{M}}
W_1\!\left(
D^{\mathrm{true}}_{(j)}, \hat{D}_{(j)}
\right)$ 
where $W_1$ denotes the 1-Wasserstein distance between empirical distributions \cite{VillaniOT}. Both metrics are calculated on the original data scale. To summarise comparative performance, for each of the 245 simulation runs we compute the performance metric under both the weighted and unweighted baselines, and report the ratio of weighted to unweighted values. The only exception is in the downstream prediction results in Figures~\ref{fig:DownstreamRegression} and~\ref{fig:DownstreamClassification}, where we also compare against an additional unimputed baseline. In this setting, we report the  performance metrics without further processing.

\subsection{Experimental Results}\label{sec:ExperimentalResults}

\begin{figure*}[ht]
  \begin{center}
    \centerline{\includegraphics[width=\textwidth]{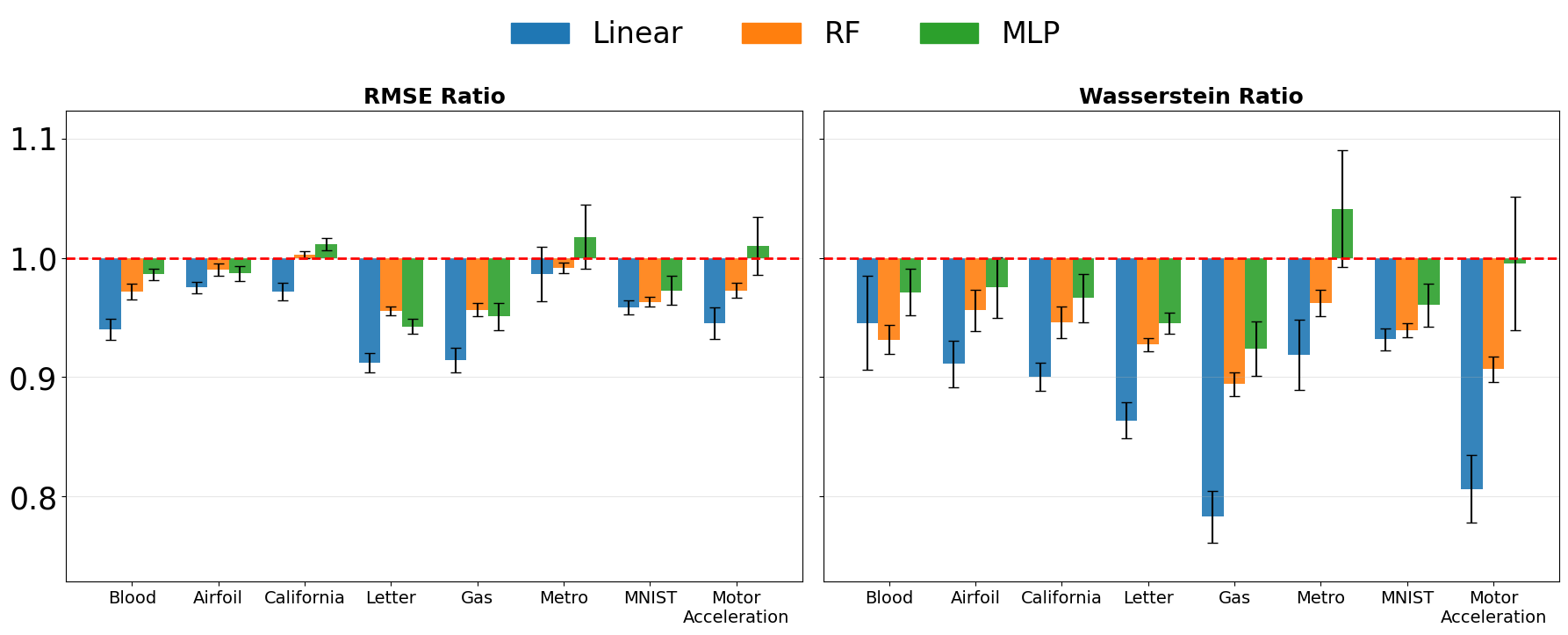}}
    \captionsetup{skip=4pt}
    \caption{
    Values below $1$ indicate improved performance. We report mean performance ratios of our weighted method relative to the unweighted baseline. Error bars represent approximate 95\% confidence intervals, computed as $\pm 1.96$ standard errors of the mean (SEM). Results are reported for RMSE and Wasserstein distance. The plot is best viewed in colour.
    }
    \label{fig:overall-iterative}
  \end{center}
\end{figure*} 

\paragraph{Overall Performance and Runtime:} Figure~\ref{fig:overall-iterative} summarises the overall results grouped by dataset and conditional imputation model class. Our method improves RMSE in $20$ out of $24$ simulation settings and Wasserstein distance in $23$ out of $24$ settings, providing strong support for the efficacy of the weighted approach. Experimental comparisons to a wider range of imputation methods are available in Appendix~\ref{appendix:ComparisonFull}. We find that RMSE gains are consistently smaller for more expressive models such as RF and MLP. This is expected behaviour, and is supported by the theory in Appendix~\ref{appendix:FurtherTheoreticalResultsCovShift}, where we show that the benefit of importance weighting diminishes as the approximation error of  conditional imputation model classes decrease. Importantly, Figure~\ref{fig:overall-iterative} shows that while improvements in RMSE are modest in some settings, with displayed 95\% confidence intervals overlapping or close to 1, Wasserstein distance exhibits consistently larger gains, with confidence intervals remaining well below 1 across most settings. These results reflect improved distributional fidelity, which \cite{vanbuuren2018flexible} identifies as important for high-quality imputation.

In terms of runtime, our proposed method achieves faster runtimes for linear conditional models and incurs only a $10$\% increase in runtime for RF and MLP models, indicating that extra computational cost remains manageable in practice. A full breakdown of run times are available in Appendix~\ref{appendix:ComparisonRuntime}, and details are provided in Appendix~\ref{appendix:gamma-discussion} on practical steps control runtime in our method.

\paragraph{Sensitivity Analysis:}
As we  simulate missingness under a known logistic regression model, the true importance weights 
$w_i = \frac{p(R_i = 0 \mid X_{\mathrm{obs}})}{p(R_i = 1 \mid X_{\mathrm{obs}})}$
can be computed for each data point. We summarise the true weights using the $95^{th}$ percentile of , $q_{95}$, where larger values indicate stronger covariate shift present in a simulation. Figure~\ref{fig:MARStrength} shows that for small $q_{95}$ --- corresponding to MCAR or weak MAR --- the weighted method provides little improvement over the unweighted baseline, consistent with the discussion in Appendix~\ref{appendix:FurtherTheoreticalResultsMCAR}. As $q_{95}$ increases, the weighted approach yields consistent improvements even under high levels of covariate shift, as evidenced by a mean ratio below 1 and 95\% confidence intervals that do not cross 1. The only exception is the MLP model, where intervals occasionally touch 1. We discuss this observation in more detail in Section~\ref{sec:Limitations}.

\begin{figure}[ht]
  \centering
  \includegraphics[width=\textwidth]{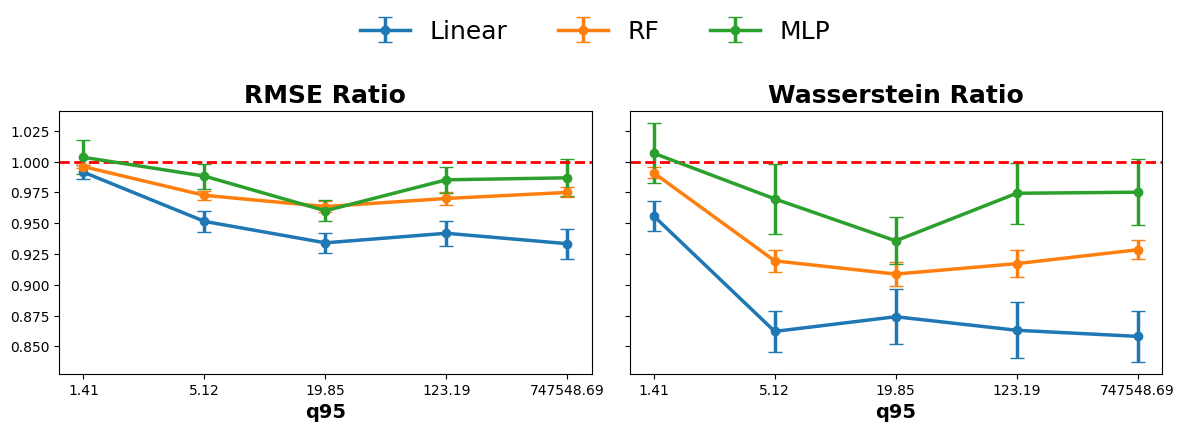}
  \captionsetup{skip=4pt}
  \caption{
 Values below 1 indicate improved performance. We report mean performance ratios of the weighted method relative to the unweighted baseline as a function of weight magnitude (summarised by the 95th percentile of the true weights). Lines correspond to different conditional imputation model types. Error bars denote $\pm 1.96$ SEM, corresponding to approximate 95\% confidence intervals. Results are reported for RMSE and Wasserstein distance. The plot is best viewed in colour.
  }
  \label{fig:MARStrength}
\end{figure}

We further study sensitivity to dataset characteristics on the Letter dataset, varying sample size, number of features, and missingness rate. Figure~\ref{fig:SensitivityAnalysis} shows that, in 55 of the 57 settings considered, the reported 95\% confidence intervals lie below or overlap with 1, indicating that the weighted method does not degrade performance and often improves upon the unweighted baseline.

\paragraph{Robustness to Model Misspecification:}
For violations of the MAR assumption, the plots in Appendix~\ref{app:RobustnessToAssumptionsMNAR} show that performance of our method degrades gradually as the probability of missingness depends increasingly on unobserved variables, with no evidence of abrupt failure. Similarly, for the weight misspecification setting, the plots in Appendix~\ref{app:RobustnessToAssumptionsWeightEstimation} show that performance degrades gradually as the missingness mechanism transitions from a linear relationship to a cubic one, with no evidence of abrupt failure. 
Full details of this study are given in Appendix~\ref{app:RobustnessToAssumptions}. 

\paragraph{Downstream Tasks:}
In Appendix~\ref{appendix:DownstreamApplication}, we evaluate imputation quality via downstream regression and classification tasks on the Gas and Letter datasets, respectively, using RMSE and accuracy as performance metrics. We compare XGBoost and linear models fitted after imputation under our weighted and unweighted approaches, along with an unimputed baseline---XGBoost without imputation; linear models with mean imputation and missingness indicators. Figures~\ref{fig:DownstreamRegression} and \ref{fig:DownstreamClassification}, together with Wilcoxon signed-rank tests, show that for regression, weighting yields statistically significant improvements for linear and MLP imputation models and typically outperforms the unimputed baseline across both XGBoost and linear predictors. For classification, XGBoost performs similarly across all imputation strategies, including the unimputed baseline. In contrast, multinomial logistic regression benefits substantially from imputation, with all methods outperforming the unimputed baseline and weighted variants showing statistically significant improvements over their unweighted counterparts. We additionally assess structural preservation by measuring how well each method recovers the empirical correlation matrix of the fully observed data. On this task, weighting substantially improves performance across all settings.

\section{Discussion}\label{sec:Discussion}
We briefly highlight connections between the techniques developed in this work and related ideas more broadly in missing data literature.  Related work has also connected missing data and covariate shift. \citet{pmlr-v206-zhou23b} study domain adaptation under missingness shift and show that the resulting problem can be interpreted as a covariate shift setting when missingness indicators are observed. While the recovered covariate shift problem is closely related to our setting, the starting assumptions and objectives differ between our two works. \citet{pmlr-v206-zhou23b} assume a pre-specified shift and focus on downstream prediction tasks, whereas we consider imputation without an explicitly specified shift and show that a distributional mismatch naturally arises from training on observed data. Similarly, \citet{10.1093/jrsssb/qkae009} frame a conformal prediction problem under covariate shift as analogous to a missing data problem, enabling the use of tools from missing data theory. This perspective is likewise developed in the context of downstream prediction tasks under an assumed shift structure and therefore differs in objective from our setting. Finally, Inverse probability weighting (IPW) methods for selection bias correction in downstream inference \citep{RobinsEtAlMARIPW, DRIPW} are also related to our work, as they adjust for discrepancies between observed and full data distributions in parameter estimation. While we also employ importance weighting to address a similar issue of distributional shift, our focus is on imputation rather than downstream parameter estimation.

\section{Limitations}\label{sec:Limitations}
While weighting improves performance in many settings, we observe situations it hinders performance. In particular, as shown in Figures~\ref{fig:overall-iterative} and~\ref{fig:MARStrength}, performance can degrade for high-variance conditional models such as MLPs, reflecting the the bias-variance trade-off associated with importance weighting \citep{Shimodaira2000CovariateShift}. Moreover, under weak-MAR regimes, our approach may offer limited improvement over existing unweighted methods. Finally, violations of the MAR assumption or poor estimation of importance weights may lead to worse performance than unweighted approaches.

\section{Conclusion}

We have shown that in risk-based approaches to imputation, where training is necessarily performed on observed data, one must account for a distributional shift between the observed and full data distributions when learning optimal imputation functions. We proposed a novel round-robin iterative imputation algorithm that explicitly accounts for this distributional shift and demonstrated the effectiveness of our approach across a range of experimental settings. Overall, our results suggest that weighted iterative imputation methods are a promising direction for further methodological development. Several avenues for future work remain. First, developing methods that are robust to weak distributional shift remains an open problem. Second, extending the framework beyond point estimation to probabilistic imputation objectives is a natural next step. Finally, our approach could be strengthened by incorporating more sophisticated density ratio estimation techniques, suggesting a promising connection to advances in this area.

\newpage
\appendix

\section{Theoretical Derivations}
\label{app:TheoreticalDerivations}

\subsection{Proof of Lemma~\ref{lemma:MSERiskUnObs}}
\label{app:MSELossDecomposes}

\begin{proof}

For this derivation, let $\mathbf{1}_{\{A\}}$ denote the indicator function of event $A$. Then, under $\mathrm{MSE}$ loss and a deterministic imputation function Definition~\ref{def:imputation-map}, our objective simplifies as follows:
\begin{align}
    \mathcal{J}(g) &= \mathbb{E}_{X, R}\Big[
\text{MSE}\big\{ g(\tilde{X}), X\big\}\Big]\nonumber \\
    &= \mathbb{E}_{X, R}\left[\sum_{i=1}^d
\big\{g_i(\tilde{X}) - X_i\big\}^2\right]\label{eq:MSELossStep0} \\
    &= \mathbb{E}_{X, R}\left[\sum_{i=1}^d \left\{R_i X_i + (1-R_i)g_i(\tilde{X}) - X_i\right\}^2\right] \label{eq:MSELossStep1} \\
    &= \mathbb{E}_{X, R}\left[\sum_{i=1}^d \{g_i(\tilde{X}) - X_i\}^2 \mathbf{1}_{\{R_i=0\}}\right]
    \label{eq:MSELossStep2} \\
    &= \sum_{i=1}^d \mathbb{E}_{X, R}\left[\{g_i(\tilde{X}) - X_i\}^2 \mathbf{1}_{\{R_i=0\}}\right] \label{eq:MSELossStep3}
\end{align}

Equation~\eqref{eq:MSELossStep0} follows by definition of MSE.  Equation~\eqref{eq:MSELossStep1} follows by writing $g_i(\tilde{X})$ as an $R_i$-weighted sum of the true value $X_i$ and the imputed value $g_i(\tilde{X})$ which holds due to the assumed property that imputation functions preserve already observed coordinates of $\tilde{X}$. Equation~\eqref{eq:MSELossStep2} by noting that only coordinates with $R_i = 0$ contribute to the loss, and Equation~\eqref{eq:MSELossStep3} by linearity of expectation.

Focusing on the $i^{\text{th}}$ term of the sum, we have:
\begin{align}
\mathbb{E}_{X, R}\left[\{g_i(\tilde{X}) - X_i\}^2 \mathbf{1}_{\{R_i=0\}}\right]
&= \mathbb{E}_{R_i}\left(\mathbb{E}_{X, R_{\neg i} \mid R_i}\left[\left\{g_i(\tilde{X}) - X_i\right\}^2 \mathbf{1}_{\{R_i=0\}}\right]\right)\label{eq:MSELosstermistep1} \\ 
&= \mathbb{E}_{R_i}\left(\mathbf{1}_{\{R_i=0\}}\mathbb{E}_{X, R_{\neg i} \mid R_i}\left[\left\{g_i(\tilde{X}) - X_i\right\}^2\right]\right)\label{eq:MSELosstermistep2} \\
&= \sum_{r_i \in \{0, 1\}}\mathbf{1}_{\{r_i=0\}}\mathbb{E}_{X, R_{\neg i} \mid R_i = r_i}\left[\left\{g_i(\tilde{X}) - X_i\right\}^2\right]p(R_i = r_i)\label{eq:MSELosstermistep3}\\
&= \mathbb{E}_{X, R_{\neg i} \mid R_i = 0}\left[\left\{g_i(\tilde{X}) - X_i\right\}^2\right] \, p(R_i=0)\label{eq:MSELosstermistep4}
\end{align}

Equation~\eqref{eq:MSELosstermistep1} follows by the law of total expectation. Equation~\eqref{eq:MSELosstermistep2} follows by pulling $\mathbf{1}_{\{R_i=0\}}$ outside the inner expectation when conditioning on $R_i$. Equation~\eqref{eq:MSELosstermistep3} follows by the definition of expectation over $R_i$. Equation~\eqref{eq:MSELosstermistep4} follows because the $\mathbf{1}_{\{r_i=0\}}$ kills the term in the sum with $r_i = 1$.

Therefore, 
\begin{equation}
    \mathcal{J}(g) = \mathbb{E}_{X, R}\Big[
\text{MSE}\big\{(X_{\mathrm{obs}}, g(\tilde{X})), X\big\}\Big] = \sum_{i=1}^dp(R_i = 0)\mathbb{E}_{X, R_{\neg i}}\left[\left\{g_i(\tilde{X}) - X_i\right\}^2 \mid R_i = 0\right]\nonumber
\end{equation}

Finally, $\forall i \not \in \mathcal{P}$, $p(R_i = 0) = 0$ and hence these terms drop out of the sum yielding 
\begin{equation}
    \mathcal{J}(g) = \sum_{i \in \mathcal{P}}p(R_i = 0)\mathbb{E}_{X, R_{\neg i}}\left[\left\{g_i(\tilde{X}) - X_i\right\}^2 \mid R_i = 0\right]\nonumber
\end{equation}
\end{proof}

\newpage

\subsection{Proof of Proposition~\ref{prop:GeneralImputationObs}}
\label{app:rewritingMSEobjective}

\begin{proof}

We aim to rewrite the $i^{th}$- coordinate risk
\[
p(R_i = 0)\mathbb{E}_{X, R_{\neg i}}\left[\left\{g_i(\tilde{X}) - X_i\right\}^2 \mid R_i = 0\right]
\]
in terms of observed data only. Writing $w_i(x_{\mathrm{obs}}, r_{\neg i}) := \frac{p(x_{\mathrm{obs}}, r_{\neg i} \mid R_i = 0)}{p(x_{\mathrm{obs}}, r_{\neg i} \mid R_i = 1)}$ and ignoring the $p(R_i = 0)$ term for now, we obtain the following,
\begin{align} &\mathbb{E}_{X, R_{\neg i} \mid R_i = 0}\Big[\{g_i(\tilde{X}) - X_i\}^2\Big]\nonumber\\
&= \mathbb{E}_{X_{\mathrm{obs}}, R_{\neg i} \mid R_i = 0} 
\Big(\, \mathbb{E}_{X_{\mathrm{miss}} \mid X_{\mathrm{obs}}, R_{\neg i}, R_i = 0}\Big[ \{g_i(\tilde{X}) - X_i\}^2\Big] \, \Big)\label{MSERiskObsStep0} \\
&= \mathbb{E}_{X_{\mathrm{obs}}, R_{\neg i} \mid R_i = 0} 
\Big(\, \mathbb{E}_{X_{\mathrm{miss}} \mid X_{\mathrm{obs}}, R_{\neg i}, R_i = 1}\Big[ \{g_i(\tilde{X}) - X_i\}^2\Big] \, \Big) \label{MSERiskObsStep1} \\
&= \int \mathbb{E}_{X_{\mathrm{miss}} \mid X_{\mathrm{obs}}, R_{\neg i}, R_i = 1}\Big[ \{g_i(\tilde{X}) - X_i\}^2\Big] 
\, p(x_{\mathrm{obs}}, r_{\neg i} \mid R_i = 0) \, d x_{\mathrm{obs}} \, d r_{\neg i}\label{MSERiskObsStep2} \\
&= \int \mathbb{E}_{X_{\mathrm{miss}} \mid X_{\mathrm{obs}}, R_{\neg i}, R_i = 1}\Big[ \{g_i(\tilde{X}) - X_i\}^2\Big] 
\, p(x_{\mathrm{obs}}, r_{\neg i} \mid R_i = 1) \, w_i(x_{\mathrm{obs}}, r_{\neg i})
\, dx_{\mathrm{obs}} \, dr_{\neg i}\label{MSERiskObsStep3} 
\\
&= \mathbb{E}_{X_{\mathrm{obs}}, R_{\neg i} \mid R_i = 1} 
\Big(\, \mathbb{E}_{X_{\mathrm{miss}} \mid X_{\mathrm{obs}}, R_{\neg i}, R_i = 1}\Big[ w_i(x_{\mathrm{obs}}, r_{\neg i})\{g_i(\tilde{X}) - X_i\}^2\Big] \, \Big) \label{MSERiskObsStep4} \\
&= \mathbb{E}_{X_{\mathrm{obs}}, R_{\neg i} \mid R_i = 1} \Big[
w_i(X_{\mathrm{obs}}, R_{\neg i})
(g_i(\tilde{X}) - X_i)^2 
\Big]\label{MSERiskObsStep5} 
\end{align}

Equation~\eqref{MSERiskObsStep0} follows by the law of total expectation. Equation~\eqref{MSERiskObsStep1} follows from the MAR assumption 
$R \perp\!\!\!\perp X_{\mathrm{miss}} \mid X_{\mathrm{obs}}$, which implies that the conditional distribution of $X_{\mathrm{miss}}$ does not depend on $R_i$, 
and we may replace $R_i=0$ by $R_i=1$ without altering the inner expectation. Equation~\eqref{MSERiskObsStep2} follows by writing out explicitly the outer expectation. Equation~\eqref{MSERiskObsStep3} follows by factoring $p(x_{\mathrm{obs}}, r_{\neg i} \mid R_i = 0) = p(x_{\mathrm{obs}}, r_{\neg i} \mid R_i = 1) \, w_i(x_{\mathrm{obs}}, r_{\neg i})$. 
Equation~\eqref{MSERiskObsStep4} follows by bringing the weighting inside the inner expectation and recognising that the outer expectation is now over $(X_{\mathrm{obs}}, R_{\neg i}) \mid R_i = 1$. Equation~\eqref{MSERiskObsStep5} follows by iterated expectation.

It remains to verify that the weights are well-defined. Points with 
$p(X_{\mathrm{obs}}, R_{\neg i} \mid R_i = 0) = 0$ do not contribute to the expectation and can be ignored. 
For points where $p(X_{\mathrm{obs}}, R_{\neg i} \mid R_i = 0) > 0$, Bayes' rule gives
\[
p(X_{\mathrm{obs}}, R_{\neg i} \mid R_i = 1)
= \frac{p(R_i = 1 \mid X_{\mathrm{obs}}, R_{\neg i})\, p(X_{\mathrm{obs}}, R_{\neg i})}{p(R_i = 1)} > 0
\]
by the positivity assumption. Hence,
\[
p(X_{\mathrm{obs}}, R_{\neg i} \mid R_i = 1) > 0
\quad \text{whenever} \quad
p(X_{\mathrm{obs}}, R_{\neg i} \mid R_i = 0) > 0,
\]
and the weights $w_i(X_{\mathrm{obs}}, R_{\neg i})$ are well-defined.

\end{proof}
\begin{remark}
Equation~\eqref{MSERiskObsStep1} highlights the source of covariate shift induced by the MAR mechanism. In practice, we may train a model to predict $X_i$ by minimising squared error over samples for which $R_i = 1$. This corresponds to the inner expectation. However, the quantity of interest is the performance of our learned model on the population where $R_i = 0$, corresponding to the outer expectation. In general, the distributions of $(X_{\mathrm{obs}}, R_{\neg i}) \mid R_i = 1$ and $(X_{\mathrm{obs}}, R_{\neg i}) \mid R_i = 0$ differ, leading to a covariate shift between the training and target populations.

If this shift is not accounted for, the resulting estimator may be biased. The weighting function $w_i(X_{\mathrm{obs}}, R_{\neg i})$ corrects for this discrepancy by re-weighting the observed-data distribution to match the target distribution.
\end{remark}

\newpage

\section{Further theoretical results}\label{appendix:FurtherTheoreticalResults}

In this section, we derive theoretical results that help explain the empirical patterns observed in Section~\ref{sec:Experiments}. In particular, we investigate when the optimal solutions learned under the subpopulations $R_i = 0$ and $R_i = 1$ coincide. We consider risk minimisation for coordinate $i$ for which we are choosing an optimal imputation function from the model class $\mathcal{G}_i$.

For this section, let:
\begin{itemize}
	\item 
    $\mathcal{J}_i^{(r)}(g_i) = \mathbb{E}\Big[\big(g_i(\tilde{X}) - X_i\big)^2 \,\big|\, R_i = r\Big], \quad r \in \{0,1\}$, be the optimisation problems corresponding to coordinate $i$ under $R_i = r$
	\item $g_i^{*, (r)} = \underset{g \in \mathcal{G}_i}{\arg \min} \left\{ \mathcal{J}_i^{(r)}(g) \right\}$ be the loss minimising imputation function for column $i$ chosen from a class $\mathcal{G}_i$ under $R_i = r$
	\item $f_i(\tilde{X}) = \mathbb{E}[X_i \mid \tilde{X}] = \mathbb{E}[X_i \mid X_{\mathrm{obs}}]$ be the theoretically optimal imputation function. We note that, under MAR, this solution is optimal under $R_i = 1$ and $R_i = 0$ and, in actuality, depends only on $X_{\mathrm{obs}}$.
	\item $\varepsilon_{i, (r)}^2 := \underset{g_i \in \mathcal{G}_i}{\min}
    \mathbb{E}\Big[\big(g_i(\tilde{X}) - f_i(\tilde{X})\big)^2 \,\big|\, R_i = r\Big]
$ be the approximation error of $\mathcal{G}_i$ under $R_i=r$
\end{itemize}

In the above definitions $\mathcal{J}_i^{(0)}(g_i)$ is the true coordinate-wise risk of coordinate $g_i$ as defined in \ref{eq:MSERiskUnobsComponent}, whereas $\mathcal{J}_i^{(1)}(g_i)$ represents the unweighted objective minimised by existing unweighted imputation algorithms.

\subsubsection{Agreement of population-optimal imputation functions under covariate shift}\label{appendix:FurtherTheoreticalResultsCovShift}
 
\begin{proposition}[Distance between population minimisers under covariate shift]\label{prop:ConvergingSolutions}
	Suppose that:
	\begin{itemize}
		\item $\exists \kappa > 0$ such that $\frac{dP(\tilde X \mid R_i=0)}{dP(\tilde X \mid R_i=1)} \le \kappa^{-1}$ almost surely (overlap condition).
        \item All functions $g_i \in \mathcal{G}_i$ belong to the intersection
        \[
        g_i \in L^2(\tilde X \mid R_i = 0) \cap L^2(\tilde X \mid R_i = 1),
        \]
        i.e., they have finite second moments under both conditional distributions $P(\tilde X \mid R_i = 0)$ and $P(\tilde X \mid R_i = 1)$. This ensures that all terms in the proof are well-defined and that we can apply results such as the triangle inequality in $L^2$ and have an intepretation of distance.
	\end{itemize}
	Then the population minimisers satisfy
	\[
	\mathbb{E}\Big[ \big(g_i^{*,(0)}(\tilde{X}) - g_i^{*,(1)}(\tilde{X})\big)^2 \,\big|\, R_i = 0 \Big] 
	\le \big(\varepsilon_{i, (0)} + \kappa^{-1/2}\varepsilon_{i, (1)}\big)^2.
	\]
\end{proposition}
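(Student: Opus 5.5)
The plan is to work in the Hilbert space $L^2(P_{\tilde X \mid R_i=0})$ with norm $\|h\|_0 := \mathbb{E}[h(\tilde X)^2 \mid R_i=0]^{1/2}$. We apply the triangle inequality through the common point $f_i$, and obtain
\[
\|g_i^{*,(0)} - g_i^{*,(1)}\|_0 \le \|g_i^{*,(0)} - f_i\|_0 + \|f_i - g_i^{*,(1)}\|_0 .
\]
Squaring then gives the claimed form $(\varepsilon_{i,(0)} + \kappa^{-1/2}\varepsilon_{i,(1)})^2$. It therefore suffices to establish the two bounds $\|g_i^{*,(0)} - f_i\|_0 \le \varepsilon_{i,(0)}$ and $\|g_i^{*,(1)} - f_i\|_0 \le \kappa^{-1/2}\varepsilon_{i,(1)}$. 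The second moment assumption on $\mathcal{G}_i$ guarantees that all these norms are finite. Finiteness of $f_i$ in both norms follows from $\mathbb{E}[X_i^2]<\infty$ and Jensen's inequality.

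The key identity is the standard bias–variance (Pythagorean) decomposition under each subpopulation. Under MAR, $f_i(\tilde X) = \mathbb{E}[X_i \mid X_{\mathrm{obs}}]$ is also $\mathbb{E}[X_i \mid \tilde X, R_i=r]$ for each $r$, by the same invariance argument used in step \eqref{MSERiskObsStep1} of the proof of Proposition~\ref{prop:GeneralImputationObs}. Since $g_i(\tilde X)$ is a function of $\tilde X$, the cross term vanishes and
\[
\mathcal{J}_i^{(r)}(g) = \mathbb{E}\big[(g - f_i)^2 \mid R_i=r\big] + \mathbb{E}\big[(f_i - X_i)^2 \mid R_i = r\big].
\]
The second term does not depend on $g$. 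Hence minimising $\mathcal{J}_i^{(r)}$ over $\mathcal{G}_i$ is the same as minimising the $L^2(R_i=r)$ distance to $f_i$. So $g_i^{*,(r)}$ attains $\varepsilon_{i,(r)}^2$, that is, $\mathbb{E}[(g_i^{*,(r)}-f_i)^2\mid R_i=r] = \varepsilon_{i,(r)}^2$. For $r=0$ this is exactly the first bound.

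For the second bound we change measure using the overlap constraint on the Radon–Nikodym derivative:
\[
\mathbb{E}\big[(g_i^{*,(1)} - f_i)^2 \mid R_i=0\big] = \mathbb{E}\Big[\tfrac{dP(\tilde X\mid R_i=0)}{dP(\tilde X\mid R_i=1)}(g_i^{*,(1)}-f_i)^2 \,\Big|\, R_i=1\Big] \le \kappa^{-1}\varepsilon_{i,(1)}^2 .
\]
Taking square roots gives $\kappa^{-1/2}\varepsilon_{i,(1)}$, and combining with the triangle inequality finishes the proof.

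The main obstacle I expect is the conditional-mean step. One must justify that $\mathbb{E}[X_i \mid \tilde X, R_i = r] = f_i(\tilde X)$ under both subpopulations, which is what lets the cross term vanish in each. The subtlety is that $\tilde X$ encodes $R_{\neg i}$ as well as $X_{\mathrm{obs}}$, and the relevant set of observed coordinates changes with the pattern. I would handle this pattern by pattern, using $R \perp\!\!\!\perp X_{\mathrm{miss}} \mid X_{\mathrm{obs}}$ as in the proof of Proposition~\ref{prop:GeneralImputationObs}. A smaller point is that the minimisers are assumed to exist; if they do not, the argument goes through verbatim with near-minimisers and a limiting step.
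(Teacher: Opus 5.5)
Your proposal is correct within the paper's setup and follows its proof essentially step for step. The bias--variance decomposition identifies $g_i^{*,(r)}$ with the best $L^2(P_{\tilde X\mid R_i=r})$ approximation of $f_i$ in $\mathcal{G}_i$, and then the triangle inequality through $f_i$ in $L^2(P_{\tilde X\mid R_i=0})$, together with the change of measure with density ratio at most $\kappa^{-1}$, gives the bound.

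The conditional-mean invariance you flag is not derived in the paper but stipulated in its setup: $f_i$ is declared optimal under both $R_i=0$ and $R_i=1$. Treat it as an assumption rather than deriving it pattern by pattern from MAR. On patterns with $R_i=1$, an MAR mechanism may depend on $X_i$ itself. For instance, with $d=2$ and $i=2$, letting $p(R=(0,1)\mid x)$ depend on $x_2$ tilts the law of $X_2\mid X_1$ on $\{R=(1,1)\}$ but not on $\{R=(1,0)\}$. What makes the invariance hold is something like conditional independence of $R_i$ and $X_i$ given $(X_{\mathrm{obs}},R_{\neg i})$, with $X_{\mathrm{obs}}$ excluding $X_i$.
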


\begin{proof}
For any $g_i \in \mathcal{G}_i$ and $r \in \{0, 1\}$, 
by the bias-variance decomposition gives 
\begin{equation}\label{eq:BiasVarianceDecomp}
\mathbb{E}\big[(g_i(\tilde X) - X_i)^2 \mid R_i = r\big] 
= \mathbb{E}\big[(g_i(\tilde X) - f(\tilde X))^2 \mid R_i = r\big] + \mathbb{E}\big[(X_i - f(\tilde X))^2 \mid R_i = r\big]
\end{equation}

Since $\mathbb{E}\big[(X_i - f(\tilde X))^2 \mid R_i = r\big]$ does not depend on $g_i$, the minimiser, $g_i^{*,(r)}$, of the left hand side of \ref{eq:BiasVarianceDecomp} must also be the minimiser of $\mathbb{E}\big[(g_i(\tilde X) - f(\tilde X))^2 \mid R_i = r\big]$. Therefore, 
\[
\mathbb{E}\big[(g_i^{*,(r)}(\tilde X) - f(\tilde X))^2 \mid R_i = r\big] = \varepsilon_{i, (r)}^2
\]

Now, consider the $L^2$ distance between the optimal functions under $R_i = 0$ and $R_i = 1$:
\begin{align}
&\mathbb{E}\big[ (g_i^{*,(0)}(\tilde X) - g_i^{*,(1)}(\tilde X))^2 \mid R_i = 0 \big]^{1/2}\nonumber\\
&\le \mathbb{E}\big[ (g_i^{*,(0)}(\tilde X) - f_i(\tilde X))^2 \mid R_i = 0 \big]^{1/2} 
+ \mathbb{E}\big[ (g_i^{*,(1)}(\tilde X) - f_i(\tilde X))^2 \mid R_i = 0 \big]^{1/2}\label{eq:TriangleInequality}
\end{align}
by the triangle inequality in $L^2$.  

The first term is exactly $\varepsilon_{i, (0)}$. For the second term, the triangle inequality gives us:
\begin{align}
\mathbb{E}\big[ (g_i^{*,(1)}(\tilde X) - f(\tilde X))^2 \mid R_i = 0 \big] 
&= \int (g^{*,(1)}(\tilde x) - f(\tilde x))^2 \, dP(\tilde x \mid R_i=0)\nonumber\\
&\le \kappa^{-1} \int (g^{*,(1)}(\tilde x) - f(\tilde x))^2 \, dP(\tilde x \mid R_i=1)
= \kappa^{-1} \varepsilon_{i, (1)}^2\label{eq:UpperBound}
\end{align}

Combining \eqref{eq:TriangleInequality} and \eqref{eq:UpperBound}, and squaring both sides gives the stated bound:
\[
\mathbb{E}\big[ (g^{*,(0)}(\tilde X) - g^{*,(1)}(\tilde X))^2 \mid R_i = 0 \big] \le \big(\varepsilon_{i, (0)} + \kappa^{-1/2}\varepsilon_{i, (1)}\big)^2.
\]
\end{proof}

\begin{corollary}\label{corr:ConvergingSolutions}[Optimisation problems are equivalent for well specified models]
	Suppose that $f_i \in \mathcal{G}_i$
	Then $g^{*,(0)}(\tilde{X}) = g^{*,(1)}(\tilde{X})$
\end{corollary}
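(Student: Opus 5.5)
The plan is to obtain the corollary as the degenerate case of Proposition~\ref{prop:ConvergingSolutions}, working under the same overlap and square-integrability hypotheses. The first step is to show that if $f_i \in \mathcal{G}_i$, both approximation errors vanish. Since $(g_i - f_i)^2 \ge 0$ for every $g_i$, and choosing $g_i = f_i$ gives zero, we get $\varepsilon_{i,(0)}^2 = \varepsilon_{i,(1)}^2 = 0$. Here it matters that $f_i(\tilde X) = \mathbb{E}[X_i \mid X_{\mathrm{obs}}]$ is the same function under $R_i = 0$ and $R_i = 1$, which holds by MAR as noted in the bullet list above. So a single element of $\mathcal{G}_i$ attains both minima.

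The second step is to substitute these into the bound of Proposition~\ref{prop:ConvergingSolutions}. This gives
\[
\mathbb{E}\big[(g_i^{*,(0)}(\tilde X) - g_i^{*,(1)}(\tilde X))^2 \mid R_i = 0\big] \le (0 + \kappa^{-1/2}\cdot 0)^2 = 0.
\]
Hence $g_i^{*,(0)} = g_i^{*,(1)}$ holds $P(\tilde X \mid R_i = 0)$-almost surely, which is the set on which the target risk $\mathcal{J}_i^{(0)}$ is evaluated.

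An alternative, more direct route goes through the decomposition \eqref{eq:BiasVarianceDecomp}. For each $r$ it shows that $g_i^{*,(r)}$ minimises $\mathbb{E}[(g_i - f_i)^2 \mid R_i = r]$, and when $f_i \in \mathcal{G}_i$ this minimum is zero. Any minimiser then equals $f_i$ almost surely under $P(\tilde X \mid R_i = r)$, so both equal $f_i$ and hence each other. This route does not even need the overlap constant $\kappa$ for the identification of each minimiser.

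The main obstacle is interpretive rather than technical, namely in what sense "$=$" holds. Minimisers are only unique up to null sets: $g_i^{*,(1)} = f_i$ only $P(\cdot \mid R_i = 1)$-a.s., and transferring this to $R_i = 0$ requires $P(\tilde X \mid R_i = 0) \ll P(\tilde X \mid R_i = 1)$. That absolute continuity is exactly the overlap condition (the bounded Radon--Nikodym derivative in Proposition~\ref{prop:ConvergingSolutions}, or Definition~\ref{def:overlap}). I would therefore state the conclusion as equality almost surely under $P(\tilde X \mid R_i = 0)$, or as equality of the minimiser sets if the argmin is not a singleton. Either version suffices to say the weighted and unweighted objectives yield the same imputation function on the missing subpopulation.
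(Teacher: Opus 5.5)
Your main argument is the paper's proof: with $f_i \in \mathcal{G}_i$, both approximation errors $\varepsilon_{i,(0)}$ and $\varepsilon_{i,(1)}$ vanish, and the bound of Proposition~\ref{prop:ConvergingSolutions} collapses to zero. Your added remark that this gives equality only $P(\tilde X \mid R_i = 0)$-almost surely, with the overlap condition being what lets you carry null sets from one subpopulation to the other, is a correct sharpening of what the paper's one-line argument actually establishes.
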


\begin{proof}
   If $f_i \in \mathcal{G}_i$, then $\varepsilon_0 = \varepsilon_1 = 0$. The result follows directly from Proposition~\ref{prop:ConvergingSolutions}.

\end{proof}

\paragraph{Remark:}
Proposition~\ref{prop:ConvergingSolutions} shows that as the expressive power of the model class $\mathcal{G}_i$ increases, and hence the approximation errors $\varepsilon_0$ and $\varepsilon_1$ decrease, the population-optimal solutions under $R_i = 0$ and $R_i = 1$ converge. This provides a theoretical explanation for the empirical observation that our method yields larger improvements for simpler model classes, such as linear conditional models, while the gains diminish for more expressive models such as MLPs and random forests, where approximation error is already small.

\subsubsection{Agreement of population-optimal imputation functions under MCAR missingness}\label{appendix:FurtherTheoreticalResultsMCAR}

\begin{proposition}
[Equivalence of optimisation problems under MCAR]
\label{prop:MCARIndependentR}
Suppose that the data satisfy the MCAR assumption. Then,
\[
g_i^{*,(0)}(\tilde{X}) = g_i^{*,(1)}(\tilde{X}).
\]
\end{proposition}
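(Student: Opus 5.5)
The plan is to show that under MCAR the two objectives $\mathcal{J}_i^{(0)}$ and $\mathcal{J}_i^{(1)}$ are the same functional on $\mathcal{G}_i$, so their minimisers coincide. Under MCAR, $p(r \mid x) = p(r)$, so $R \perp\!\!\!\perp X$. The first step is to compute the joint law of $(X, R_{\neg i})$ conditional on $R_i = r$:
\[
p(x, r_{\neg i} \mid R_i = r) = \frac{p(r_{\neg i}, r_i = r)\, p(x)}{p(R_i = r)} = p(x)\, p(r_{\neg i} \mid R_i = r).
\]
Under this law, $X$ is still distributed as $p(x)$ and is independent of $R_{\neg i}$. Only the marginal of $R_{\neg i}$ can depend on $r$.

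The main obstacle is that $g_i(\tilde X)$ depends on $R_{\neg i}$ through which coordinates are masked. So the conditional laws of $\tilde X$ given $R_i = 0$ and given $R_i = 1$ may differ whenever $R_i$ and $R_{\neg i}$ are dependent, which MCAR allows. There is also a subtlety in the definition: $\tilde X_i$ is $*$ under $R_i = 0$ but $X_i$ under $R_i = 1$, so the input to $g_i$ must be read with coordinate $i$ masked, as in the surrogate risk, where $g_i$ acts on $\tilde X_{\neg i}$. I would handle this in one of two ways.

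\emph{Option (a):} assume, as in the paper's framework, that $g_i$ depends only on $X_{\neg i}$, or at least only on $(X_{\mathrm{obs}}$ restricted to $\neg i)$ with the pattern treated as a fixed convention. Then $\mathcal{J}_i^{(r)}(g_i) = \mathbb{E}_{X}[(g_i(X_{\neg i}) - X_i)^2]$ for both $r \in \{0,1\}$. The two risks are identical functionals, and hence $g_i^{*,(0)} = g_i^{*,(1)}$, understood as any common minimiser, or equality almost surely if the minimiser is unique.

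\emph{Option (b):} if $g_i$ is allowed to depend on the pattern, I would condition on $R_{\neg i} = r_{\neg i}$. For each fixed pattern, the conditional law of $X$ is $p(x)$ regardless of $R_i$. So the pattern-wise risks agree, and the total risks differ only through the mixing weights $p(r_{\neg i} \mid R_i = r)$. A pointwise (pattern-by-pattern) minimiser of the pattern-wise risk then minimises both mixtures.

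An alternative route is to apply Proposition~\ref{prop:ConvergingSolutions} with $\kappa = 1$. Under MCAR, in setting (a), the density ratio $dP(\tilde X \mid R_i = 0)/dP(\tilde X \mid R_i = 1)$ equals $1$, which gives equality of the conditional laws directly. I would instead present the direct functional-equality argument, since it does not need $f_i \in \mathcal{G}_i$.

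The key steps, in order, are: (1) factorise the density using MCAR; (2) deduce that the law of the inputs of $g_i$ together with $X_i$ is the same under $R_i = 0$ and $R_i = 1$, or handle the pattern-wise case as in (b); (3) conclude that $\mathcal{J}_i^{(0)} \equiv \mathcal{J}_i^{(1)}$ on $\mathcal{G}_i$, so the argmins coincide. Step (2) is the delicate point.
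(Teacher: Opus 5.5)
Your proof is correct, and it reaches the paper's conclusion (the two objectives coincide, so their minimisers coincide) by a different and more careful route. The paper goes through Proposition~\ref{prop:GeneralImputationObs}. It drops $R_{\neg i}$ from the importance weights by appeal to Section~\ref{sec:SurrogateRisk}, asserts that under MCAR the law of $X_{\mathrm{obs}}$ does not depend on $R_i$, and concludes $w_i \equiv 1$.

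You instead factorise $p(x, r_{\neg i} \mid R_i = r) = p(x)\,p(r_{\neg i} \mid R_i = r)$ directly. In case (a) this needs neither Proposition~\ref{prop:GeneralImputationObs} nor its overlap condition. It also exposes what the paper's step hides: under MCAR the weights of Proposition~\ref{prop:GeneralImputationObs} reduce to $p(r_{\neg i} \mid R_i = 0)/p(r_{\neg i} \mid R_i = 1)$. This ratio is identically $1$ only when the missingness pattern of the other coordinates is independent of $R_i$.

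The paper's argument is therefore effectively your option (a). It buys brevity and the reading ``MCAR means no covariate shift''. Yours is self-contained and makes the dependence on the missingness pattern explicit. Your remark about masking coordinate $i$ is also right. Read literally with Definition~\ref{def:imputation-map}, $\mathcal{J}_i^{(1)}(g_i) = 0$ for every admissible $g_i$, a point the paper's proof passes over.

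Option (b) goes beyond what the paper proves, but you should state its hypotheses.
\begin{itemize}
\item The claim that a pattern-by-pattern minimiser minimises both mixtures needs $\mathcal{G}_i$ to let you choose $g_i$ independently on each pattern $r_{\neg i}$, and needs those pattern-wise minimisers to exist. If parameters are shared across patterns, the differing mixing weights $p(r_{\neg i} \mid R_i = r)$ move the minimiser, and the conclusion can fail even under MCAR. For example, take a single linear model on mean-imputed inputs, where a partially observed predictor $X_j$ is correlated with the always-observed predictors and $P(R_j = 0 \mid R_i = 0) \neq P(R_j = 0 \mid R_i = 1)$. The fitted coefficients then depend on the fraction of rows with $X_j$ missing.
\item You also need every pattern with positive probability given $R_i = 0$ to have positive probability given $R_i = 1$, i.e.\ overlap at the pattern level. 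Otherwise $g_i^{*,(1)}$ is unconstrained on patterns that occur only when $X_i$ is missing, and you get only that a common minimiser exists, not equality almost surely under $R_i = 0$.
\item The parenthetical in (a) about treating the pattern ``as a fixed convention'' is valid only if that pattern is deterministic or independent of $R_i$. Otherwise you are back in case (b).
\end{itemize}
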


\begin{proof}
As MCAR is a special case of MAR, our derived optimal weights in \eqref{eq:importanceWeights} are still valid. As argued in section~\ref{sec:SurrogateRisk}, $R_{\neg i}$ can be ignored in this weighting yielding
\[
w_i(X_{\mathrm{obs}})
=
\frac{p(X_{\mathrm{obs}} \mid R_i = 0)}{p(X_{\mathrm{obs}} \mid R_i = 1)}.
\]
Under the MCAR assumption, the distribution of $X_{\mathrm{obs}}$ does not depend on $R_i$, and hence
\[
p(X_{\mathrm{obs}} \mid R_i = 0) = p(X_{\mathrm{obs}} \mid R_i = 1),
\]
so that $w_i(X_{\mathrm{obs}}) \equiv 1$ almost surely.

Consequently, the objectives $\mathcal{J}_i^{(0)}(g)$ and $\mathcal{J}_i^{(1)}(g)$ coincide, and therefore their population minimisers are identical.
\end{proof}

\newpage

\section{Supplementary Experimental Results}\label{appendix:ExperimentsSupplimentary}

\subsection{Dataset Metadata and Preprocessing}\label{appendix:DatasetMeta}

\begin{table}[ht]
\centering
\caption{Summary of datasets used in experiments with dataset URLs}
\label{tab:dataset_summary}
\begin{tabular}{l c c l p{6cm}}
\hline
\textbf{Dataset} 
& \textbf{Rows} 
& \textbf{Columns} 
& \textbf{Data type} 
& \textbf{URL} \\
\hline
Blood 
& 748
& 3 
& Tabular 
& \url{https://doi.org/10.24432/C5GS39.} \\
Airfoil 
& 1503 
& 6 
& Tabular 
& \url{https://doi.org/10.24432/C5VW2C} \\
California 
& 20640
& 8 
& Tabular 
& \url{https://scikit-learn.org/stable/modules/generated/sklearn.datasets.fetch_california_housing.html} \\
Letter 
& 20000
& 16
& Tabular 
& \url{https://doi.org/10.24432/C5ZP40} \\
Gas 
& 416153 
& 9 
& Time series 
& \url{https://doi.org/10.24432/C5762W} \\
Metro 
& 1516948
& 15
& Time series 
& \url{https://doi.org/10.24432/C5VW3R} \\
Motor Acceleration 
& 511806
& 57
& Time series 
& \url{https://doi.org/10.18419/DARUS-3301} \\
MNIST 
& 70000
& 196
& Image 
& \url{http://yann.lecun.com/exdb/mnist/} \\
\hline
\end{tabular}
\end{table}

Table \ref{tab:dataset_summary} gives a brief summary of the datasets used in our study. Some preprocessing was applied to datasets where necessary.

\begin{itemize}
    \item \textbf{Blood:} Remove one perfectly co-linear column.
    \item \textbf{California:} Capped columns so that skewness lies in the range $[-10, 10]$
    \item \textbf{Motor Acceleration:} Removed unary columns and capped columns so that skewness lies in the range $[-10, 10]$
    \item \textbf{MNIST:} To reduce computational overhead while preserving the salient digit structure, images were downsampled from $28 \times 28$ to $14 \times 14$ for the experiments.

\end{itemize}

\subsection{Non-Iterative Imputation Method Comparison}\label{appendix:ComparisonFull}

Beyond the experimental results provided in the Section~\ref{sec:ExperimentalResults}, here we provide a comparison of our algorithm against a wider range of state-of-the-art imputation methods. In particular we consider \begin{itemize}
    \item \textbf{HyperImpute}: An iterative imputation method that performs a principled model search at each iteration to choose the best performing conditional model. \citep{jarrett2022hyperimpute}
    \item \textbf{GAIN}: A GAN-based approach in which a generator imputes missing entries and a discriminator distinguishes them from observed entries. \citep{yoon2018gainmissingdataimputation}
    \item \textbf{Sinkhorn}: An approach that minimises an optimal transport divergence between distributions of observed and imputed entries. \citep{MuzellecEtAllSinkhorn}
    \item \textbf{SoftImpute}: An approach that performs low-rank matrix completion using nuclear-norm regularization with iterative soft-thresholded SVD.\citep{MazumderEtAlSoftimpute}
\end{itemize}  

The results of this wider study are shown in Figure~\ref{fig:OverallComparison}.

\begin{figure*}[ht]
    \centering
    \includegraphics[width=\textwidth]{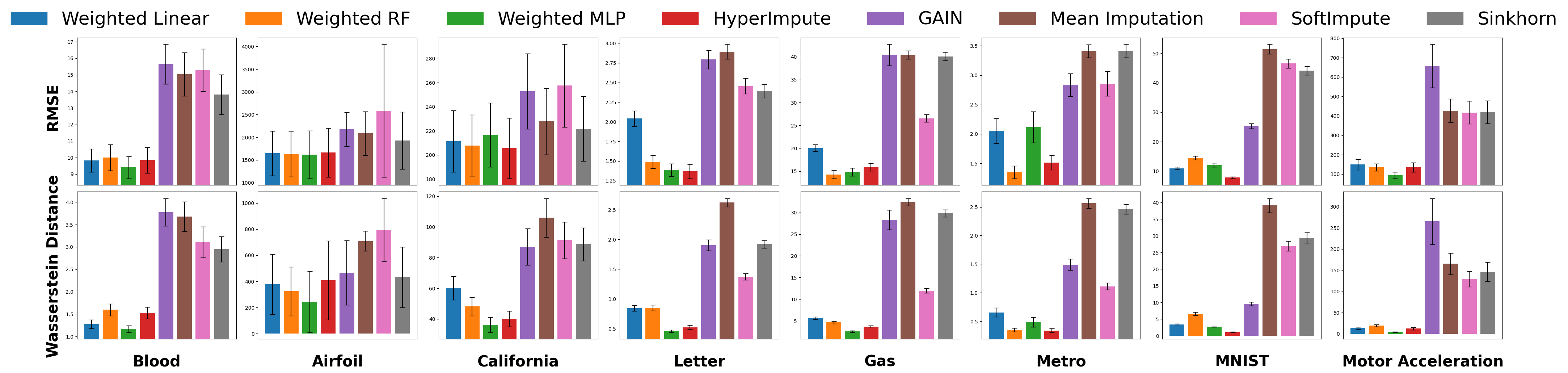}
    \captionsetup{skip=4pt}
    \caption{
        Lower values indicate better performance. The figure compares a broad range of methods, showing that the weighted variants are competitive across all scenarios. Bars denote mean performance, with error bars given by $\pm 1.96$ SEM, corresponding to approximate 95\% confidence intervals. Results are reported for RMSE and Wasserstein distance. The plot is best viewed in colour.}    \label{fig:OverallComparison}
\end{figure*}

\subsection{Runtimes}\label{appendix:ComparisonRuntime}

Table~\ref{tab:runtime_full} summarises the runtimes for our experiments. 

Across all experiments, we consistently observe improved runtime efficiency for linear models when using our weighted approach. When aggregated across all datasets, the weighted linear method runs in approximately 47\% of the time of the corresponding unweighted approach, demonstrating a substantial computational advantage. We note that HyperImpute runtimes are not directly comparable to the our approach under linear conditional models as HyperImpute performs hyperparameter search at each iteration, incurring overhead beyond base model fitting that accounts for its longer runtimes.

For both RF and MLP based models, we observe relatively larger proportional increases in runtime on smaller datasets. However, these differences are small in absolute terms, typically amounting to only a few seconds, and therefore have limited practical impact.

Importantly, as dataset size increases (from the Gas dataset onwards), runtimes between weighted and unweighted approaches become more comparable, showing our proposed method scales well without introducing prohibitive computational overhead.

At the overall level, the weighted variants are approximately \textbf{12.5\% slower for RF models} and \textbf{11.3\% slower for MLP models}. Despite this modest overhead, these results demonstrate that a weighted iterative imputation approach can be deployed in practice without requiring expensive additional estimation or tuning of weighting parameters, and without introducing significant computational burden.

\begin{table}[ht]
\centering
\caption{Mean runtime (seconds) across datasets with overall mean highlighted in bold.}
\label{tab:runtime_full}
\small
\setlength{\tabcolsep}{3pt}
\renewcommand{\arraystretch}{1.2}

\begin{tabular}{llccccccccc}
\hline
 &  & Blood & Airfoil & California & Letter & Gas & Metro & MNIST & Motor Acc. & \textbf{Overall} \\
\hline

\multirow{2}{*}{Linear}
& Weighted
& 1.26 & 0.25 & 0.42 & 0.69 & 2.28 & 11.91 & 42.76 & 17.27 & \textbf{9.11} \\
& Unweighted
& 4.70 & 9.67 & 1.53 & 1.73 & 5.16 & 27.23 & 57.25 & 51.08 & \textbf{19.30} \\
\hline

\multirow{2}{*}{RF}
& Weighted
& 4.53 & 14.85 & 25.23 & 21.22 & 112.26 & 385.59 & 224.17 & 891.74 & \textbf{210.45} \\
& Unweighted
& 0.29 & 2.13 & 7.69 & 5.03 & 93.26 & 415.20 & 51.79 & 926.87 & \textbf{187.00} \\
\hline

\multirow{2}{*}{MLP}
& Weighted
& 3.19 & 15.66 & 74.99 & 90.31 & 826.31 & 1603.54 & 344.30 & 852.64 & \textbf{476.37} \\
& Unweighted
& 0.28 & 4.22 & 60.05 & 69.27 & 815.18 & 1581.80 & 203.27 & 792.08 & \textbf{428.02} \\
\hline

\end{tabular}
\end{table}

\subsection{Gamma Tuning}\label{appendix:gamma-discussion}

A naive implementation of our algorithm requires re-estimating weights and tuning the tempering parameter $\gamma$ at each iteration, via grid search with cross-validation. This can lead to a substantial increase in computational cost on top of the already iterative nature of the imputation procedure. Careful handling of $\gamma$ is therefore essential to ensure tractable runtime.

We propose a simple guiding principle: \textbf{as the expressivity of the conditional model increases, $\gamma$ can be tuned over a coarser grid and updated less frequently}. This is supported by the theoretical results in Appendix~\ref{appendix:FurtherTheoreticalResultsCovShift}, which show that the impact of importance weighting diminishes as model expressivity increases. Consequently, less computational effort needs to be spent on precisely tuning $\gamma$ in these settings.

In practice, we adopt a two-stage strategy: (i) an initial grid search to select $\gamma$, followed by (ii) local refinement via warm-starting in subsequent iterations.

\paragraph{Initial grid search:}
We select the initial $\gamma$ over a model-dependent grid reflecting the expressivity of the conditional model. Less expressive models require finer tuning, while more expressive models permit coarser grids.

\paragraph{Warm starting:}
After the initial selection, $\gamma$ is updated, in successive iterations, within a small neighbourhood of the previous value. Specifically, at iteration $t$, we search over
\[
\{\gamma^{(t-1)} - 0.05,\ \gamma^{(t-1)},\ \gamma^{(t-1)} + 0.05\},
\]

where $\gamma^{(t-1)}$ is the gamma value previously selected for our column of interest. This approach significantly reduces computational cost while ensuring stable convergence.

\paragraph{Weight update frequency:} We are not required to update weights at every iteration.
For linear models, weights and $\gamma$ are updated at every iteration. For more expressive models (RF and MLP), we reduce update frequency: an initial unweighted pass is performed, followed by two iterations where weights estimates are updated and $\gamma$ is tuned, after which both are fixed for the remaining iterations.

The specific configurations used in our experiments are summarised in Table~\ref{tab:gamma-tuning}.

\begin{table}[h]
\centering
\caption{Summary of $\gamma$ tuning strategy across model classes.}
\label{tab:gamma-tuning}
\begin{tabular}{lccc}
\toprule
Model & Initial $\gamma$ Grid & Warm Start & Update Strategy \\
\midrule
Linear & $\{0, 0.1, \dots, 1\}$ & $\pm 0.05$ & Every iteration \\
RF & $\{0, 0.5, 1\}$ & $\pm 0.05$ & Iterations 2--3, then fixed \\
MLP & $\{0, 0.25, 0.5, 0.75, 1\}$ & $\pm 0.05$ & Iterations 2--3, then fixed \\
\bottomrule
\end{tabular}
\end{table}

\subsection{Sensitivity Analysis}\label{appendix:SensitivityAnalysis}

To assess the sensitivity of our method to different data characteristics, we evaluate performance under varying data sizes, feature dimensionality, and missingness rates. Data size is controlled by subsampling rows from the original dataset, while feature count is varied by subsampling columns. Missingness is adjusted by tuning $\beta_i$ to achieve the desired missingness level as described in Section~\ref{sec:Experiments}. Figure~\ref{fig:SensitivityAnalysis} shows that our method remains robust across the range of scenarios considered, consistently outperforming the unweighted baseline.

\begin{figure*}[ht]
  \centering
  \includegraphics[width=\textwidth]{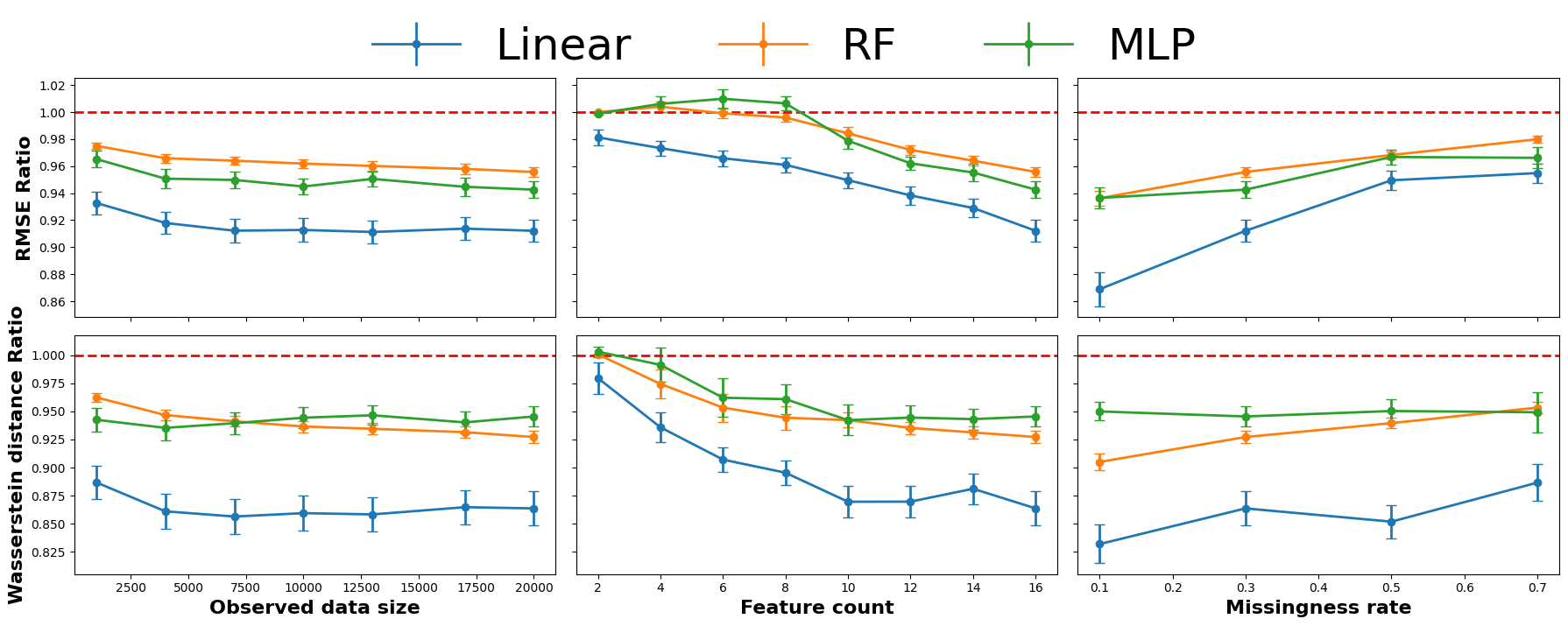}
    \captionsetup{skip=4pt}
    \caption{Values below 1 indicate improved performance. We report  mean performance ratios of the weighted method relative to its unweighted counterpart across varying dataset conditions and conditional model types. Lines denote mean ratios across dataset settings, with error bars given by $\pm 1.96$ SEM, corresponding to approximate 95\% confidence intervals. Results are reported for RMSE and Wasserstein distance across data variation regimes (e.g., sample size, number of features, and missingness rate). The plot is best viewed in colour.}
    \label{fig:SensitivityAnalysis}
\end{figure*}

\subsection{Robustness to Model and Mechanism Misspecification}\label{app:RobustnessToAssumptions}

In this section, we investigate how performance degrades under progressively stronger violations of the MAR assumption and under misspecification of the weighting model.

We consider data generated according to the following process:
\begin{align*}
X_1 &\sim \mathcal{N}(0, 1), \\
X_2 &\sim \mathcal{N}(0, 1), \\
X_3 &= X_1 X_2 + \epsilon_3, \quad \epsilon_3 \sim \mathcal{N}(0, 1), \\
X_4 &= 0.5 X_1^2 + X_2 + \epsilon_4, \quad \epsilon_4 \sim \mathcal{N}(0, 1), \\
X_5 &= X_1 X_3 + X_2^2 + \epsilon_5, \quad \epsilon_5 \sim \mathcal{N}(0, 1).
\end{align*}

\subsubsection{MNAR}\label{app:RobustnessToAssumptionsMNAR}

To simulate MNAR mechanisms, we introduce missingness in $X_4$ and $X_5$ according to
\[
P(R_j = 1 \mid X) = \sigma\!\left(c + \beta_1 X_1 + \beta_2 X_2 + \beta_3 X_3 \right), \quad j \in \{4,5\},
\]
where $\sigma(\cdot)$ is the logistic function and $c$ is calibrated to control the overall missingness rate.

\paragraph{Simulation setup 1 (missingness driven by $X_1$):}
We fix $\beta_2 = \beta_3 = 2$ and vary $\beta_1$ over the range $[0, 10]$. After generating missingness, the driver variable $X_1$ is removed from the partially observed dataset. This yields an MNAR setting in which missingness depends on an unobserved variable, with the strength of this dependence controlled by $\beta_1$.

Figure~\ref{fig:MNARX1} reports the ratio of weighted to unweighted performance for linear conditional imputation models. When $\beta_1 = 0$, the mechanism reduces to MAR, as missingness depends only on observed variables. As $\beta_1$ increases, the degree of MNAR violation increases.

The vertical line indicates the value $\beta_1^\star$ at which the majority of variance in the missingness mechanism is explained by the unobserved variable $X_1$, rather than the observed covariates. As $\beta_1$ increases, we observe a smooth degradation in performance for RMSE, while improvements in Wasserstein distance remain relatively stable across a wide range of MNAR settings.

We note that large values such as $\beta_1 = 10$ are included for illustrative purposes to probe extreme regimes. Overall, these results suggest that our proposed approach continues to offer improvement provided that the primary sources of missingness dependence are captured within the observed data.

\begin{figure}[ht]

    \centering
    \includegraphics[width=\textwidth]{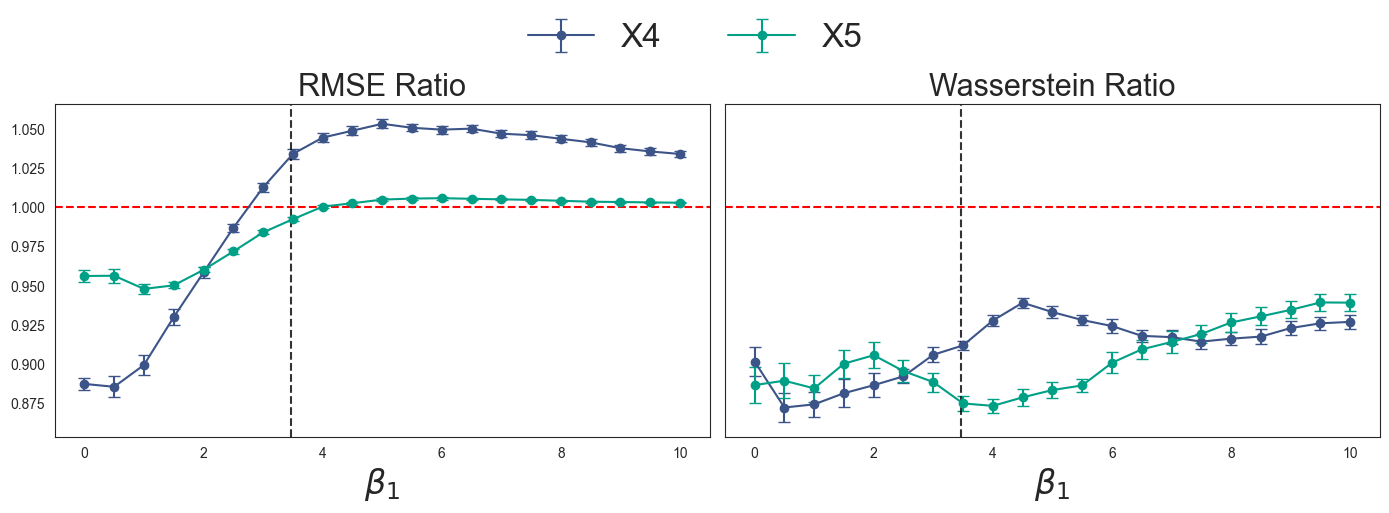}
    \captionsetup{skip=4pt}
    \caption{Values below 1 indicate improved performance. Mean performance ratios of the weighted method relative to its unweighted counterpart are shown for linear conditional models. Lines denote mean ratios, with error bars given by $\pm 1.96$ SEM, corresponding to approximate 95\% confidence intervals. The vertical line indicates the value $\beta_1^\star$, at which the majority of variance in the missingness mechanism is explained by the unobserved variable $X_1$ rather than the observed covariates. The plot is best viewed in colour.}\label{fig:MNARX1}
\end{figure}

\paragraph{Simulation setups 2 \& 3 (missingness driven by $X_2$ and $X_3$):}
We repeat the above procedure for two additional settings. In the first, missingness is driven by $X_2$, after which $X_2$ is removed from the dataset prior to imputation. In the second, missingness is driven by $X_3$, and $X_3$ is similarly removed. In both cases, we vary the corresponding coefficient to control the strength of MNAR dependence. The results are shown in Figures~\ref{fig:MNARX2} and~\ref{fig:MNARX3}. These experiments exhibit qualitatively similar behaviour. In particular, performance degrades gradually as the strength of MNAR dependence increases, rather than failing abruptly. Moreover, we typically continue to observe improvements over the unweighted baseline in regimes where the dominant drivers of missingness are observed. This suggests that our proposed method shows improvement provided that the primary sources of missingness dependence are captured within the observed data.

\begin{figure}[ht]
  
    \centering
    \includegraphics[width=\textwidth]{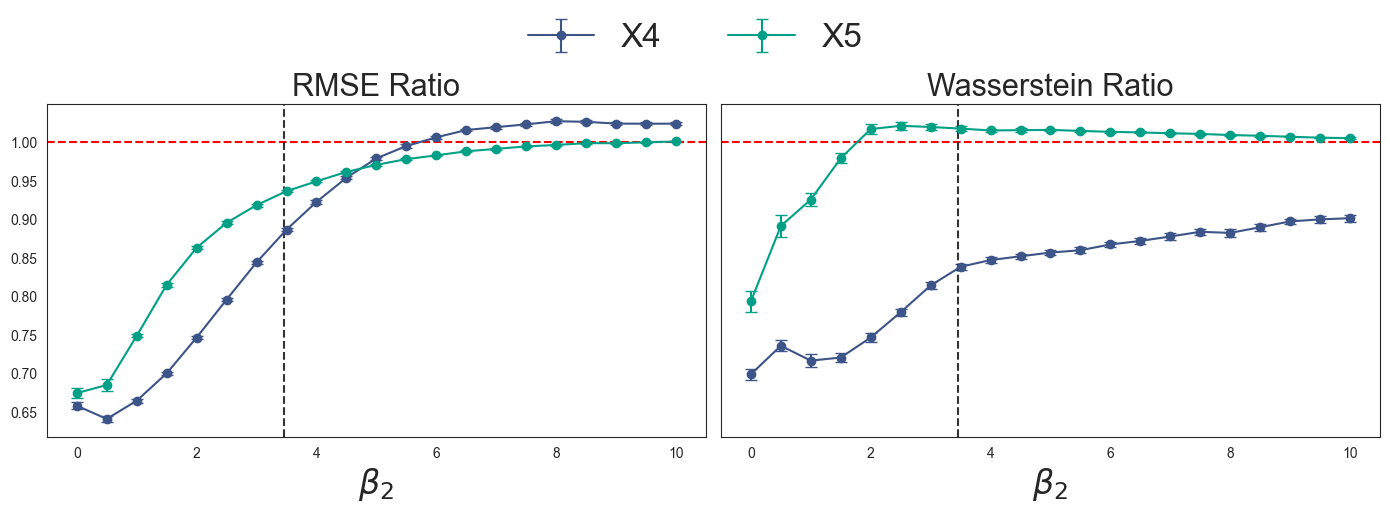}
    \captionsetup{skip=4pt}
    \caption{Values below 1 indicate improved performance. Mean performance ratios of the weighted method relative to its unweighted counterpart are shown for linear conditional models. Lines denote mean ratios, with error bars given by $\pm 1.96$ SEM, corresponding to approximate 95\% confidence intervals. The vertical line indicates the value $\beta_2^\star$, at which the majority of variance in the missingness mechanism is explained by the unobserved variable $X_2$ rather than the observed covariates. The plot is best viewed in colour.
    }\label{fig:MNARX2}
\end{figure}

\begin{figure}[ht]
    
    \centering
    \includegraphics[width=\textwidth]{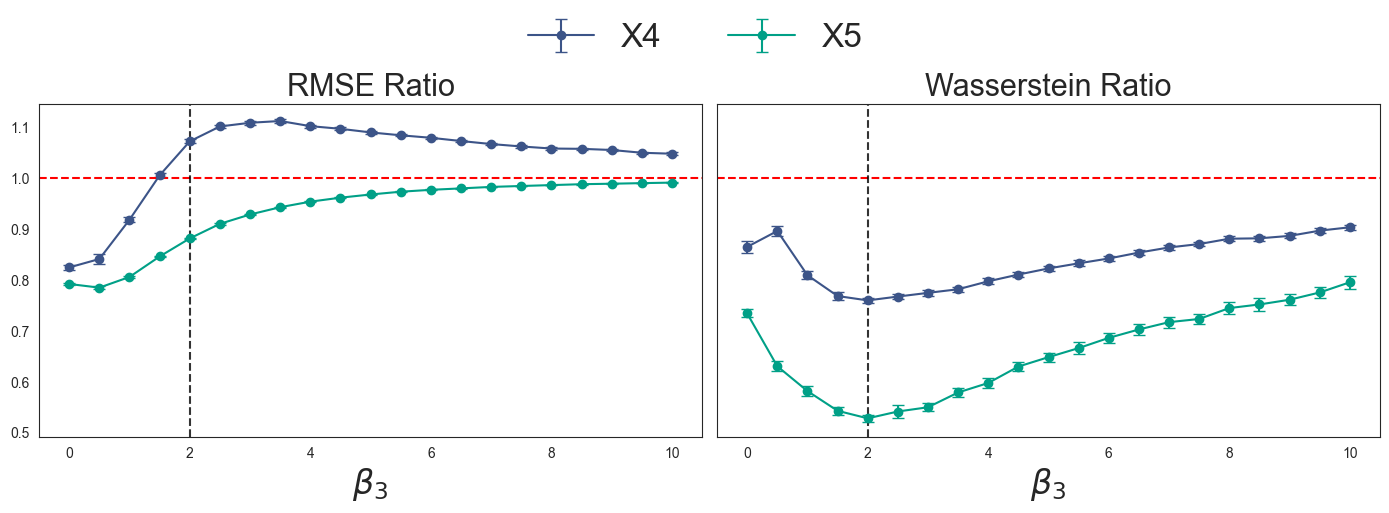}
    \captionsetup{skip=4pt}
    \caption{Values below 1 indicate improved performance. Mean performance ratios of the weighted method relative to its unweighted counterpart are shown for linear conditional models. Lines denote mean ratios, with error bars given by $\pm 1.96$ SEM, corresponding to approximate 95\% confidence intervals. The vertical line indicates the value $\beta_3^\star$, at which the majority of variance in the missingness mechanism is explained by the unobserved variable $X_3$ rather than the observed covariates. The plot is best viewed in colour.
    }\label{fig:MNARX3}
\end{figure}

\subsubsection{Misspecified Weights Model}\label{app:RobustnessToAssumptionsWeightEstimation}

We next assess robustness to misspecification of the weight model. In contrast to the MNAR experiments above, we retain a MAR mechanism but introduce controlled misspecification in the functional form used to estimate the importance weights.

\paragraph{Simulation setup:}
We generate data according to the same base process as above. Missingness in $X_4$ and $X_5$ is then introduced via logistic models depending on nonlinear transformations of the covariates:
\[
P(R_j = 1 \mid X) = \sigma\!\left(c + \beta_1 \tilde{X}_1 + \beta_2 \tilde{X}_2 + \beta_3 \tilde{X}_3 \right), \quad j \in \{4,5\},
\]
where $\tilde{X}_k$ denotes a nonlinear transformation of $X_k$ defined as
\[
\tilde{X}_k = (1 - \eta) X_k + \eta \, g(X_k), \quad k \in \{1,2,3\},
\]
with
\[
g(x) = \tfrac{1}{3}x^3 + 0.5x^2 - x - 1.
\]

The parameter $\eta \in [0,2]$ controls the degree of nonlinearity in the missingness mechanism. When $\eta = 0$, the mechanism is linear in $(X_1, X_2, X_3)$ and therefore correctly specified by a standard logistic model. As $\eta$ increases, the mechanism becomes increasingly nonlinear, inducing progressive misspecification when weights are estimated using a linear model.

\paragraph{Evaluation:}
We estimate importance weights using a logistic model linear in $(X_1, X_2, X_3)$, thereby introducing misspecification whenever $\eta > 0$. Performance is again evaluated using RMSE and Wasserstein distance, reported as ratios relative to the unweighted baseline.

Figure~\ref{fig:MissW} shows that performance degrades smoothly as $\eta$ increases. In particular, the proposed method continues to provide improvements over the unweighted baseline under mild to moderate misspecification. This suggests that our method is robust to moderate departures from correct weight specification.

\begin{figure}[ht]
  
    \centering
    \includegraphics[width=\textwidth]{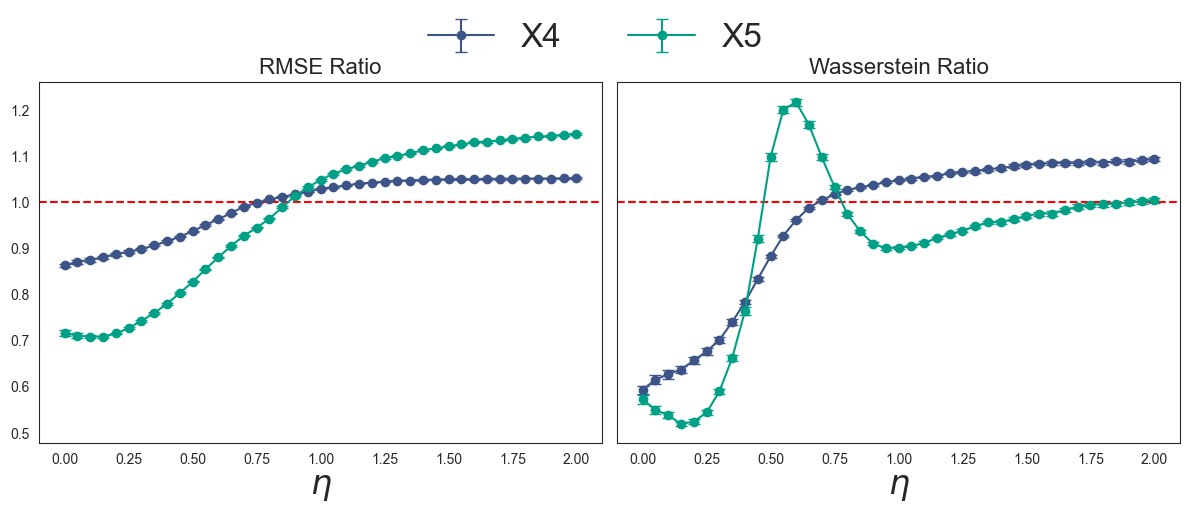}
    \captionsetup{skip=4pt}
    \caption{Values below 1 indicate improved performance. Mean performance ratios of the weighted method relative to its unweighted counterpart are shown for linear conditional models. Lines denote mean ratios across dataset settings, with error bars given by $\pm 1.96$ SEM, corresponding to approximate 95\% confidence intervals. The plot is best viewed in colour.
    }\label{fig:MissW}
\end{figure}

\subsection{Downstream Application}\label{appendix:DownstreamApplication}

We evaluate downstream performance across three tasks: downstream regression, downstream classification, and correlation matrix recovery. Below, we describe the experimental setup and results for each task.

\subsubsection{Downstream Regression}

We evaluate downstream regression on the \textbf{Gas} dataset, where \textbf{COValue} is held out as the prediction target. Missingness is introduced into the remaining features following the MAR simulation procedure outlined in Section~\ref{sec:ExperimentalSetup}. Each dataset is then imputed using the competing methods.

For each imputed dataset, we train tuned Ridge and XGBoost regression models to predict the target variable. We additionally include an ``unimputed'' baseline for comparison. For XGBoost, this corresponds to using the raw data directly, as the model can natively handle missing values. For linear models, we use mean imputation and augment inputs with missingness indicator variables.

We evaluate performance using RMSE. The results are shown in Figure~\ref{fig:DownstreamRegression}.

\paragraph{Downstream XGBoost regressor:}
For linear imputation methods, unweighted imputation can harm downstream performance, whereas our weighted approach improves it. For RF-based imputation, we observe improvements over the unimputed baseline, but insignificant difference from weighting. For MLP-based imputation, our weighted approach yields substantial improvements over both the unweighted and unimputed baselines.

\paragraph{Downstream Ridge regressor:}
For linear imputation methods, our weighted approach significantly improves performance over the unweighted variant, although both are comparable to the unimputed baseline. For RF-based imputation, we again observe improvements over the unimputed baseline, with minimal difference between weighted and unweighted methods. For MLP-based imputation, unweighted imputation can degrade performance relative to the unimputed baseline, whereas our weighted method yields substantial improvements.

\begin{figure}[ht]
  
    \centering
    \includegraphics[width=\textwidth]{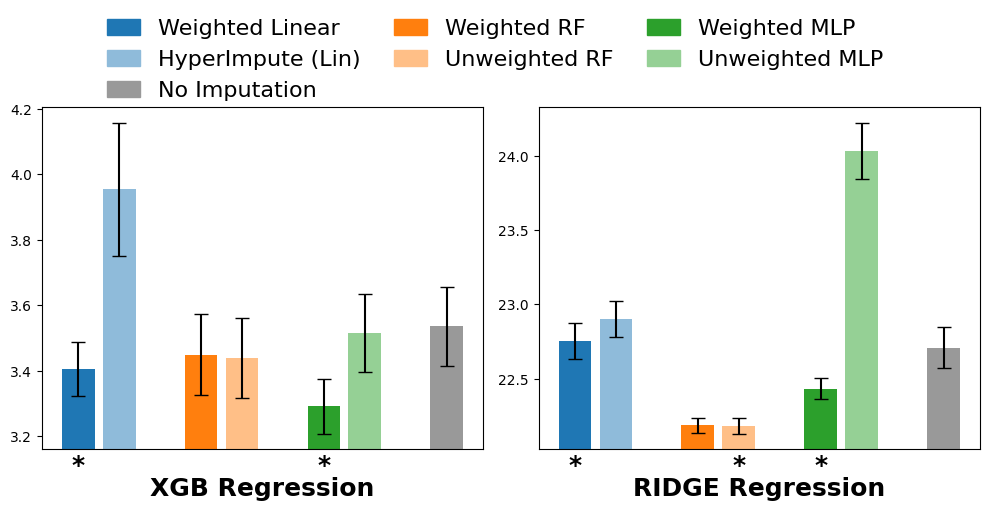}
    \captionsetup{skip=4pt}
    \caption{Lower values indicate better performance. We report mean RMSE performance of weighted and unweighted imputation methods, alongside an unimputed baseline, on downstream regression tasks evaluated on a held-out test set. Asterisks (*) indicate statistically significant improvements of weighted methods over their unweighted counterparts, based on a paired Wilcoxon signed-rank test ($p < 0.05$). Error bars denote $\pm 1.96$ SEM, corresponding to approximate 95\% confidence intervals. The plot is best viewed in colour.    }\label{fig:DownstreamRegression}
\end{figure}

\subsubsection{Downstream Classification} 

We evaluate downstream classification on the \textbf{Letter} dataset using the letter label as the prediction target. This amounts to a multiclass problem. We follow the same procedure as in the downstream regression setting, ultimately training a tuned XGBoost classifier and logistic multiclass regression model, alongside corresponding unimputed baselines.

We evaluate performance using accuracy. The results are shown in Figure~\ref{fig:DownstreamClassification}.

\paragraph{Downstream XGBoost classifier:}
We observe that the best-performing method (unweighted MLP imputation) improves accuracy by only 0.3\% over the worst-performing configuration (unweighted linear imputation), indicating that effect sizes are small in this setting.

\paragraph{Downstream logistic regression classifier:}
We observe that all imputation methods improve downstream performance over the unimputed baseline. Furthermore, our weighted approach consistently outperforms its unweighted counterparts across all imputation families.

\begin{figure}[ht]
  
    \centering
    \includegraphics[width=\textwidth]{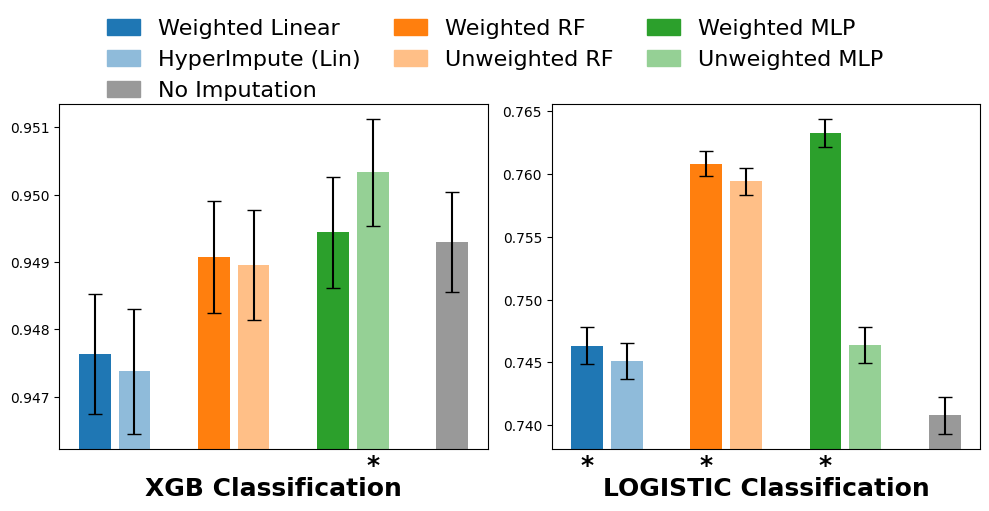}
    \captionsetup{skip=4pt}
    \caption{Higher values indicate better performance. We report mean accuracy of weighted and unweighted imputation methods, alongside an unimputed baseline, on downstream classification tasks evaluated on a held-out test set. Asterisks (*) indicate statistically significant improvements of weighted over unweighted methods based on a paired Wilcoxon signed-rank test ($p < 0.05$). Error bars denote $\pm 1.96$ SEM, corresponding to approximate 95\% confidence intervals. The plots are best viewed in colour.
    }\label{fig:DownstreamClassification}
\end{figure}

\subsubsection{Correlation Matrix Recovery}

We also evaluate our imputation methods on correlation recovery as a measure how well we can recover the joint distribution in the data. We evaluate performance by measuring the frobenius error of the correlation matrix on the imputed data against the true data. Figure~\ref{fig:DownstreamFroErr} shows that across all settings, the weighted model significantly improves over the unweighted variant.

\begin{figure}[ht]
    \centering
    \includegraphics[width=0.9\textwidth]{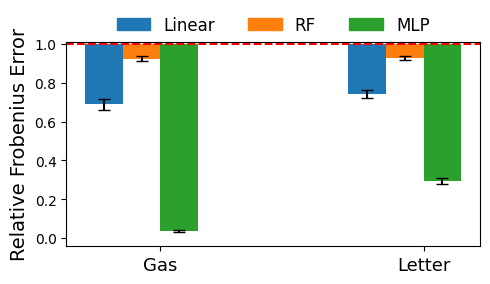}
    \captionsetup{skip=4pt}
    \caption{Values below 1 indicate improved performance. We report  mean performance ratios of the weighted method relative to the unweighted baseline across the Gas and Letter datasets and conditional model types. Bars denote mean ratios, with error bars given by $\pm 1.96$ SEM, corresponding to approximate 95\% confidence intervals. The plots are best viewed in colour.}\label{fig:DownstreamFroErr}
\end{figure}

\section{Computational Resources}\label{app:ComputationalResources}

All experiments reported in Section~\ref{sec:Experiments} were conducted on the Isambard 3 high-performance computing facility hosted by the University of Bristol. Computations were performed on CPU-only nodes; no GPU acceleration was used in any experiments.

For the main experimental results, all models were run with a maximum memory allocation of 8GB per job. This was sufficient for all reported experiments. In practice, smaller datasets required substantially less memory, and could be executed with significantly lower resource allocations.

We did not observe memory bottlenecks across any of the reported settings.



\newpage

\bibliographystyle{plainnat}   
\bibliography{references}      

\end{document}